\newcommand{\argmin}{\mathop{\rm arg~min}\limits}
\newtheorem{assumption}{Assumption}
\newtheorem{theorem}{Theorem}
\newtheorem{lemma}{Lemma}
\newtheorem{corollary}{Corollary}
\begin{document}



\title{Development of global optimal coverage control using multiple aerial robots}

\author{K. Shibata$^{1\dag}$\thanks{$^{\dag}$Corresponding author. Cloud Informatics Research-Domain, Toyota Central R$\&$D Labs., Inc., 41-1, Yokomichi, Nagakute, Aichi, Japan. Email: kshibata@mosk.tytlabs.co.jp. Tel: +81-561-71-7849. 
\vspace{6pt}} , T. Miyano$^{1}$ and T. Jimbo$^{1}$ \\\vspace{6pt}  $^{1}${\rm{Cloud Informatics Research-Domain, Toyota Central R$\&$D Labs., Inc., Nagakute, Aichi, Japan}}
}

\maketitle

\begin{abstract}
Coverage control has been widely used for constructing mobile sensor network such as for environmental monitoring, and one of the most commonly used methods is the Lloyd algorithm based on Voronoi partitions. However, when this method is used, the result sometimes converges to a local optimum. To overcome this problem, game theoretic coverage control has been proposed and found to be capable of stochastically deriving the optimal deployment. From a practical point of view, however, it is necessary to make the result converge to the global optimum deterministically. In this paper, we propose a global optimal coverage control along with collision avoidance in continuous space that ensures multiple sensors can deterministically and smoothly move to the global optimal deployment. This approach consists of a cut-in algorithm based on neighborhood importance of measurement and a modified potential method for collision avoidance. The effectiveness of the proposed algorithm has been confirmed through numerous simulations and some experiments using multiple aerial robots.

\begin{keywords}
distributed control, coverage control, aerial robot, mobile sensor
\end{keywords}\medskip

\end{abstract}

\section{Introduction}

In recent years, mobile sensor networks have been utilized in numerous industrial fields such as for environmental monitoring, infrastructure inspection, land surveying, search and rescue. In these application scenarios, sensor deployment problems need to be resolved so that multiple autonomous mobile sensors can be deployed to target positions in specific coverage regions where they are needed to collect valuable data from the environment.

Sensor deployment problems have been discussed in numerous studies through both centralized and distributed control methods. For centralized control, deployment methods using evolutionary computation were proposed in \cite{1, 2, 3}. In these studies, the total coverage performance and the energy consumption are optimized by using calculation techniques such as genetic algorithms and particle swarm optimization. However, the feasibility of real-time operation becomes severely limited if the number of mobile sensors increases significantly. Moreover, in terms of fault tolerance, the system should be capable of operating even if some mobile sensors run out of fuel or malfunction.

In contrast, distributed control of sensors has advantages in the computational cost and resilience. Among those methods, the Lloyd algorithm discussed in \cite{4, 5} is well known. 
This method is based on Voronoi partitions and is widely used considering real environmental applications in \cite{6, 7, 8}. 
However, deployments obtained by Voronoi-based algorithms sometimes converge to local optimums because each agent tends to maximize the coverage rate solely within its defined region. 
Thus, depending on initial deployment or changes in target distribution patterns, this can lead to inefficient deployments with respect to coverage rates and convergence speed. 
In \cite{9}, it was reported that if multiple agents are restricted to the use of the information within their assigned Voronoi regions, the resulting coverage performance will not necessarily be better than that produced by a single agent.

To overcome local optimality issues in distributed control of sensors, game theoretic approaches were proposed in \cite{10, 11}. In these methods, agents self-organize their positions according to gain in the partitioned grid. As a result, the sensor deployment can stochastically converge to the global optimal deployment while considering obstacles and speed differences among robots. In \cite{12}, another game theoretic control was proposed for unknown environments containing obstacles that has proved to be capable of stochastically converging to the Nash equilibrium. Moreover, based on the control method discussed in \cite{12}, literature \cite{13} proposed another learning algorithm that proved capable of stochastically converging to the global optimal deployment. However, from a practical point of view, it is desirable that the results deterministically converge to the global optimum.

In an effort to guarantee the global optimality, another Voronoi-based algorithm is proposed in \cite{14} in which each agent shares the importance magnitude in neighboring regions defined by their Voronoi partitions. Moreover, collision avoidance, which forces robots to move on a discretized grid, was considered, and multi-color mass game numerical simulations showed that this proposed algorithm provided better coverage performance than a conventional algorithm.

Moreover, since the number of mobile sensors is often less than that of the targets in actual situations, dynamic routing in partitioned Voronoi regions is proposed to handle this problem in \cite{15}. In \cite{16}, persistent coverage control is proposed as another approach to address the same problem. In this control method, a dynamic distribution density function is designed in the given region. In other words, by decreasing the density within the sensor range, mobile sensors tend to gravitate toward the other target positions with higher density.

In this paper, we propose a global optimal coverage control with collision avoidance in continuous space that ensures multiple robots can deterministically and smoothly move to global optimal positions. In the case that the number of agents is less than that of targets, the importance of measurement to be already covered is deleted or decreased such as in \cite{16}. Therefore, this paper deals with the case of same number of agents and targets in both simulations and experiments.

The remainder of this paper is organized as follows. Section 2 describes the coverage problem considered in this paper, while section 3 introduces the mathematical formula for coverage control, and then proposes a global optimal coverage control algorithm that considers collision avoidance. Section 4 shows results obtained through numerical simulations under randomly arranged targets, while section 5 introduces the constructed experimental system and shows experimental results obtained using multiple unmanned aerial robots. Finally, we conclude the paper in section 6.

\section{Problem statement}
In this section, we introduce the coverage problem considered in this paper. In the given coverage region $Q$, agent $i\in \nu:=\{1,\cdots,n\}$ moves to capture targets $l\in T:=\{1,\cdots,n_t\}$ as shown in figure \ref{fig1}. Here, we denote the positions of agent $i\in \nu$ and target $l\in T$ as $\textit{\textbf{x}}_i$ and $\boldsymbol{\mu}_l$, respectively. Let an importance of measurement at the point $\textit{\textbf{q}}$ be expressed as $\phi(\textit{\textbf{q}})$. In this paper, $\phi(\textit{\textbf{q}})$ is provided to agent $i\in \nu$ as given information. Based on $\phi(\textit{\textbf{q}})$, each agent $i\in \nu$ communicates with other agents and attempts to move to a target position in a distributed manner. To quantify the coverage problem, we define $\phi(\textit{\textbf{q}})$ as point mass given by $\phi(\textit{\textbf{q}})=1$ if $\textit{\textbf{q}}=\boldsymbol{\mu}_l$, otherwise $\phi(\textit{\textbf{q}})=0$.

Next, we define a visual sensor model for agent $i\in \nu$. Since the measurable visual sensor range is normally limited, we consider the coverage problem in which one target is covered by one agent. Let denote the visual sensor range of agent $i\in \nu$ as $D(\textit{\textbf{x}}_i)$ and the coverage rate of target $l\in T$ is given by
\begin{eqnarray}
P_l=
  \begin{cases}
   1 \ (\boldsymbol{\mu}_l \in D(\textit{\textbf{x}}_i), ^{\exists}i \in \nu)\\
   \\
   0 \ (\boldsymbol{\mu}_l \notin D(\textit{\textbf{x}}_i), ^{\forall}i \in \nu)\\
  \end{cases}.
\label{eq1}
\end{eqnarray}
\begin{figure}
\begin{center}
\includegraphics[width=10cm]{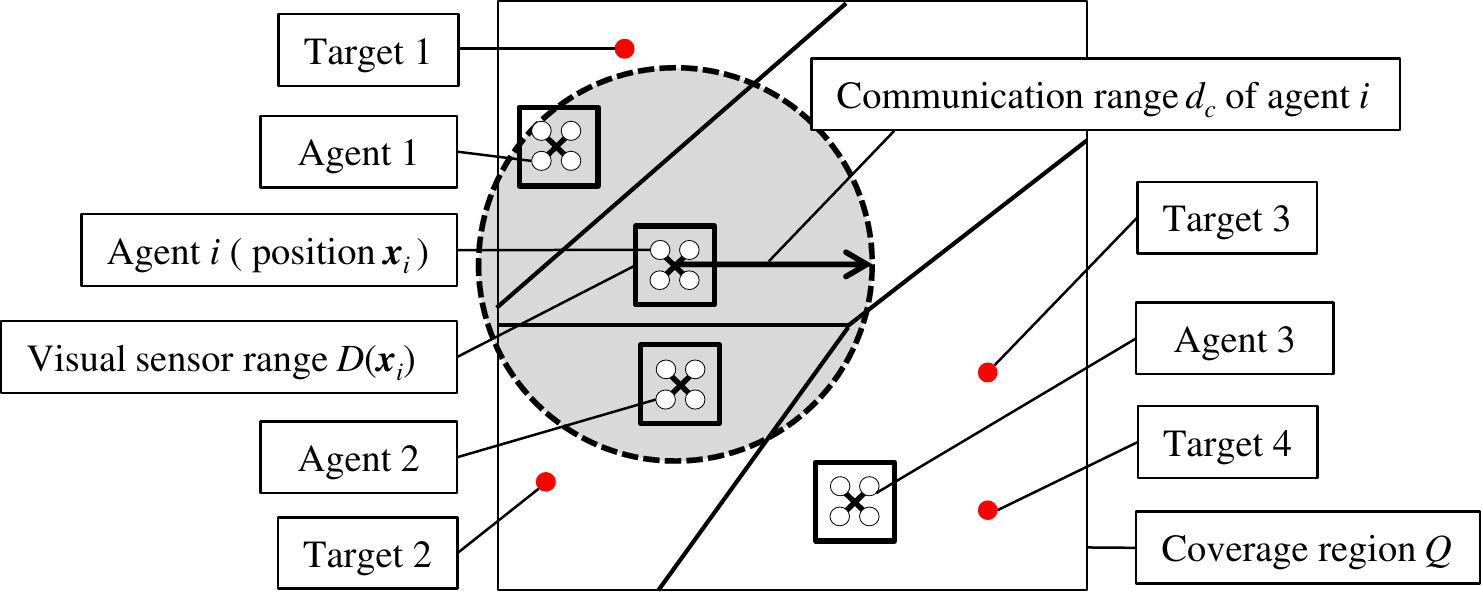}
\caption{Coverage problem in two-dimensional space. The square, dashed circle, colored dot, bold line and gray area represent the visual sensor range of each agent, communication range of agent, target, Voronoi partitions and the neighborhood $\mathcal{R}_i$ of agent $i$, respectively. 
We can calculate $C_i$ as $C_i=\{\emptyset\}$ since no target exists within the region of agent $i$. The index set of agents with which agent $i$ can communicate is given by $\mathcal{N}_i=\{1,2\}$.
Using (\ref{eq8}), the coverage rates of target 1 and 2 which agent $i$ obtains from its neighboring agents 1 and 2 are given by $P_1^{\mathcal{N}_i}=0$ and $P_2^{\mathcal{N}_i}=0$ since target 1 and 2 are uncovered.
Similarly, the coverage rates of target 3 and 4 which agent $i$ obtains are given by $P_3^{\mathcal{N}_i}=null$ and $P_4^{\mathcal{N}_i}=null$ since agent 1 cannot obtain these coverage rates from agent 3.
The target index set which is memorized by agent $i$ as an uncovered target and the target index set whose coverage rate is not obtained by agent $i$ are calculated as $V_i=\{ 1,2\}$ and $W_i=\{ 3,4\}$ using (\ref{eq11}) and (\ref{eq12}), respectively.}
\label{fig1}
\end{center}
\end{figure}

Using (\ref{eq1}), the total coverage rate $P_{\rm{cov}}$ can be calculated by
\begin{eqnarray}
P_{\rm{cov}}=\frac{1}{n_t}\sum^{n_t}_{l=1}P_l\ge \frac{1}{n_t}\sum^{n}_{i=1}\sum_{l\in C_i}P_l=P_{\rm{cov}}^L,
\label{eq2}
\end{eqnarray}
where $C_i$ is the target index set assigned to agent $i\in \nu$ discussed in section 3. The lower bound $P^L_{\rm{cov}}$ is equal to $P_{\rm{cov}}$ when the target $l\in C_i$ is necessarily covered by agent $i$. If $P^L_{\rm{cov}}$ converges to one, $P_{\rm{cov}}$ also converges to one since $P_{\rm{cov}}\le 1$.
Therefore, we solve the sensor deployment problem by optimizing $P^L_{\rm{cov}}$ in (\ref{eq2}).

Finally, we define a communication model for each agent. Since each agent can communicate with other agents within a circle with the certain radius of $d_c$, the neighborhood $\mathcal{R}_i$ of agent $i\in \nu$ is expressed as 
\begin{eqnarray}
\mathcal{R}_i=\{ \textit{\textbf{q}} \in Q:\| \textit{\textbf{q}}-\textit{\textbf{x}}_i\| \le d_c\}, \nonumber
\end{eqnarray}
where $\| \bullet \|$ represents Euclidean norm of a vector. Based on $\mathcal{R}_i$, the index set of agents with which agent $i\in \nu$ can communicate is given by
\begin{eqnarray}
\mathcal{N}_i=\{j\in \nu:\textit{\textbf{x}}_j\in \mathcal{R}_i \}.
\label{eq3}
\end{eqnarray}
Note that $\mathcal{R}_i$ and $\mathcal{N}_i$ depend on time.

\section{Control design}

In this section, we propose a global optimal coverage control to derive global optimal solutions. To escape local optimal solutions, a cut-in algorithm based on neighborhood importance of measurement is integrated into the Lloyd algorithm. 
The global optimality of our algorithm is proved without considering collision avoidance among different agents. Moreover, our algorithm allows some agents to move out of their defined regions while taking collision avoidance into consideration.

\subsection{Coverage control}
This section introduces the formulation of the Lloyd algorithm proposed in \cite{4}. In this algorithm, the coverage region $Q$ is divided into Voronoi regions defined by perpendicular bisectors established between neighboring agents. Based on this method, we define the target index set assigned to agent $i\in \nu$ given by
\begin{eqnarray}
C_i:=\{l\in T:\| \boldsymbol{\mu}_l-\textit{\textbf{x}}_i \| \le \| \boldsymbol{\mu}_l-\textit{\textbf{x}}_j \|, \ j\in \mathcal{N}_i \}.
\label{eq4}
\end{eqnarray}

Using (\ref{eq4}), $P_{\rm{cov}}^L$ in (\ref{eq2}) can be calculated. Note that the target index set $C_i$ is calculated by the deployment one time step prior to the current time step and is assumed to be invariant during each time step. Based on $C_i$, agent $i\in \nu$ determines its next reference position given by
\begin{eqnarray}
\textit{\textbf{x}}_i^{\ast}=\boldsymbol{\mu}_{l_{\ast}}\ \rm{s.t.}\it{}\ l_{\ast}=\argmin_{l\in C_i}\| \boldsymbol{\mu}_l-\textit{\textbf{x}}_i \|,
\label{eq5}
\end{eqnarray}
where $\textit{\textbf{x}}_i^{\ast}$ is the next reference position of agent $i\in \nu$. Let the dynamics of agent $i\in \nu$ be given by
\begin{eqnarray}
\dot{\textit{\textbf{x}}}_i=\textit{\textbf{u}}_i,
\label{eq6}
\end{eqnarray}
where $\textit{\textbf{u}}_i$ is the control input of agent $i\in \nu$. Using (\ref{eq5}), the control input $\textit{\textbf{u}}_i$ is calculated by
\begin{eqnarray}
\textit{\textbf{u}}_i=k_i(\textit{\textbf{x}}_i^{\ast}-\textit{\textbf{x}}_i),
\label{eq7}
\end{eqnarray}
where $k_i$ is a positive constant determined by the maximum speed of agent $i\in \nu$.

\subsection{Global optimal coverage control}
The deployment obtained by the Lloyd algorithm sometimes converges to a local optimum because each agent can only move within its defined region. 

To derive a global optimal solution, we first integrate cut-in algorithm in \cite{14} into the Voronoi-based algorithm. As additional information, agent $i\in \nu$ shares $P_l$ in (\ref{eq1}) with its neighborhood agent $j\in \mathcal{N}_i$. 
At each time step $t$, agent $i\in \nu$ obtains the coverage rate of target $l\in C_j$ from its neighboring agent $j\in \mathcal{N}_i$ given by
\begin{eqnarray}
P_l^{\mathcal{N}_i}(t)=
  \begin{cases}
   P_l(t)\ \ \rm{if}\ \it{}l\in C_j \\
   \\
   null \ \ \ \rm{otherwise}\\
  \end{cases},
\label{eq8}
\end{eqnarray}
where $null$ represents an empty value.
The coverage rate of target $l\in T$ which is memorized by agent $i\in \nu$ is updated by
\begin{eqnarray}
P_l^{i}(t+\Delta t)\leftarrow
  \begin{cases}
   P_l^{\mathcal{N}_i}(t)\ \ \rm{if}\ \it{}P_l^{\mathcal{N}_i}(t)\neq null \\
   \\
   P_l^{i}(t)\ \ \ \ \rm{otherwise}\\
  \end{cases},
\label{eq9}
\end{eqnarray}
where $\Delta t$ represents the time step. In the proposed algorithm, the next reference position of agent $i\in \nu$ is given by
\begin{subequations}
\begin{align}
\textit{\textbf{x}}_i^{\ast}&=\boldsymbol{\mu}_{l_{\ast}}\ \rm{s.t.}\it{} \ l_{\ast}=\argmin_{l\in C_i} \|\boldsymbol{\mu}_l-\textit{\textbf{x}}_i\| \ \rm{if} \ \it{}C_i\neq \emptyset, \label{eq10a} \\
\textit{\textbf{x}}_i^{\ast}&=\boldsymbol{\mu}_{l_{\ast}}\ \rm{s.t.}\it{} \ l_{\ast}=\argmin_{l\in V_i} \|\boldsymbol{\mu}_l-\textit{\textbf{x}}_i\| \ \rm{if} \ \it{}C_i= \emptyset \ \rm{and}\it{}\ V_i\notin \emptyset, \label{eq10b} \\
\textit{\textbf{x}}_i^{\ast}&=\boldsymbol{\mu}_{l_{\ast}}\ \rm{s.t.}\it{} \ l_{\ast}=\argmin_{l\in W_i} \|\boldsymbol{\mu}_l-\textit{\textbf{x}}_i\| \ \rm{if} \ \it{}C_i= \emptyset \ \rm{and}\it{}\ V_i= \emptyset \label{eq10c},
\end{align}
\label{eq10}
\end{subequations}
\begin{eqnarray}
V_i&=&\{ l\in T:\ P_l^i(t)=0 \}, \label{eq11} \\
W_i&=&\{ l\in T:\ P_l^i(t)=null\} \label{eq12},
\end{eqnarray}
where $\emptyset$, $V_i$ and $W_i$ represent an empty set, the target index set which is memorized by agent $i\in \nu$ as an uncovered target and the target index set whose coverage rate is not obtained by agent $i\in \nu$, respectively.

Next, we will prove that the proposed coverage control algorithm shown in \textbf{Algorithm 1} makes the coverage rate in (\ref{eq2}) exponentially converge to one. To accomplish this, the following assumption is made.

\begin{assumption}
\rm{It} takes longer for $\boldsymbol{\mu}_{l_{\ast}}$ in (\ref{eq10a}), (\ref{eq10b}) and (\ref{eq10c}) to change sufficiently than dynamics (\ref{eq6}) with (\ref{eq7}).
\end{assumption}
Suppose that Assumption 1 holds, then the following are trivial.

\begin{corollary}
\rm{When agent} $i\in \nu$ moves using input (\ref{eq7}) with (\ref{eq10}), the position of agent $i\in \nu$, $\textit{\textbf{x}}_i$, exponentially converges to $\textit{\textbf{x}}_i^{\ast}$ in (\ref{eq10}) from Assumption 1 and the positive constant $k_i$ in (\ref{eq7}).
\end{corollary}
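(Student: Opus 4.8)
The plan is to reduce the closed loop to a linear error dynamics and read off exponential decay directly. First I would define the tracking error $\textit{\textbf{e}}_i := \textit{\textbf{x}}_i-\textit{\textbf{x}}_i^{\ast}$ and note that the reference $\textit{\textbf{x}}_i^{\ast}$ in (\ref{eq10}) only ever takes values in the finite target set $\{\boldsymbol{\mu}_1,\dots,\boldsymbol{\mu}_{n_t}\}$ and changes only when the minimizing index $l_{\ast}$ switches; hence $\textit{\textbf{x}}_i^{\ast}$ is piecewise constant in $t$, and by Assumption 1 the intervals on which it is constant are long compared with the time constant $1/k_i$ of (\ref{eq6})--(\ref{eq7}).

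Next, on any such interval I would use $\dot{\textit{\textbf{x}}}_i^{\ast}=\mathbf{0}$ together with (\ref{eq6}) and (\ref{eq7}) to obtain $\dot{\textit{\textbf{e}}}_i=\dot{\textit{\textbf{x}}}_i=k_i(\textit{\textbf{x}}_i^{\ast}-\textit{\textbf{x}}_i)=-k_i\textit{\textbf{e}}_i$. Integrating this linear ODE from the start $t_0$ of the interval gives $\textit{\textbf{e}}_i(t)=e^{-k_i(t-t_0)}\textit{\textbf{e}}_i(t_0)$, so that $\|\textit{\textbf{x}}_i(t)-\textit{\textbf{x}}_i^{\ast}\|=e^{-k_i(t-t_0)}\|\textit{\textbf{x}}_i(t_0)-\textit{\textbf{x}}_i^{\ast}\|$, which is exactly exponential convergence of $\textit{\textbf{x}}_i$ to $\textit{\textbf{x}}_i^{\ast}$ at rate $k_i>0$. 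If a Lyapunov presentation is preferred, the same conclusion follows from $V_i=\frac{1}{2}\|\textit{\textbf{e}}_i\|^2$, for which $\dot V_i=\textit{\textbf{e}}_i^{\top}\dot{\textit{\textbf{e}}}_i=-2k_iV_i$ and hence $V_i(t)=e^{-2k_i(t-t_0)}V_i(t_0)$.

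The only step requiring any care — and the reason the statement is phrased as a consequence of Assumption 1 rather than as a pure ODE fact — is the handover between successive constant pieces of $\textit{\textbf{x}}_i^{\ast}$: if $l_{\ast}$ were to switch on a timescale comparable to $1/k_i$, the error would be re-excited at every switch and ``convergence to $\textit{\textbf{x}}_i^{\ast}$'' would not even be well posed. Assumption 1 rules this out, so on the active interval the bound above holds verbatim and the limit point is well defined. Beyond stating this time-scale separation cleanly I do not anticipate any real obstacle, which is consistent with the authors labelling the result trivial.
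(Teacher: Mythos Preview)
Your argument is correct: the paper does not actually prove this corollary but simply declares it trivial under Assumption 1, and your error-dynamics computation $\dot{\textit{\textbf{e}}}_i=-k_i\textit{\textbf{e}}_i$ (equivalently the Lyapunov version) is exactly the standard justification one would supply. There is nothing to add or change.
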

\begin{corollary}
\rm{Let consider} the case where more than or equal to one agent keep moving to target $l\in T$. When these agents move using input (\ref{eq7}) with (\ref{eq10}), one of the agents become closest from target $l\in T$ and target $l\in T$ is allocated to the closest agent using (\ref{eq4}).
\end{corollary}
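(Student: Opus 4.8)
\emph{Proof proposal.} The plan is to read the claim off directly from Corollary~1 together with the linearity of the closed loop (\ref{eq6})--(\ref{eq7}). Fix the target $l$ and let $S\subseteq\nu$ be the set of agents whose reference satisfies $\textit{\textbf{x}}_i^{\ast}=\boldsymbol{\mu}_l$, so that $|S|\ge 1$ by hypothesis. By Assumption~1 the point $\boldsymbol{\mu}_l$ may be treated as fixed over the relevant horizon, hence for each $i\in S$ the agent obeys the linear system $\dot{\textit{\textbf{x}}}_i=k_i(\boldsymbol{\mu}_l-\textit{\textbf{x}}_i)$, whose solution gives the closed form $\|\boldsymbol{\mu}_l-\textit{\textbf{x}}_i(t)\|=\|\boldsymbol{\mu}_l-\textit{\textbf{x}}_i(0)\|\,e^{-k_i t}$. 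Thus each distance $d_i(t):=\|\boldsymbol{\mu}_l-\textit{\textbf{x}}_i(t)\|$ is continuous, strictly positive until the agent reaches the target, and strictly decreasing to $0$; this is just the exponential convergence of Corollary~1 made quantitative.

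Next I would identify the closest agent. Since $\{d_i\}_{i\in S}$ is a finite family of continuous functions, for every $t$ the minimum $\min_{i\in S}d_i(t)$ is attained; let $i^{\ast}(t)$ be a minimizer, breaking ties by index. If all $k_i$ are equal, the ordering of the $d_i(t)$ coincides with that of the initial distances $d_i(0)$ for all $t$, so $i^{\ast}$ is constant; if the gains differ, the agent with the largest $k_i$ eventually has the smallest $d_i(t)$, so $i^{\ast}(t)$ is eventually constant. In either case there is a time past which one fixed agent $i^{\ast}\in S$ is, and remains, the closest to $\boldsymbol{\mu}_l$ among the agents moving to $l$.

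It then remains to check that (\ref{eq4}) assigns $l$ to $i^{\ast}$. From $d_i(t)\to 0$ there is $T_0$ with $d_i(t)\le d_c/2$ for all $i\in S$ and $t\ge T_0$; the triangle inequality then gives $\|\textit{\textbf{x}}_i(t)-\textit{\textbf{x}}_j(t)\|\le d_i(t)+d_j(t)\le d_c$ for $i,j\in S$, hence $S\subseteq\mathcal{N}_{i^{\ast}}$. Evaluating the defining inequality of $C_{i^{\ast}}$ in (\ref{eq4}): for $j\in S$ it holds because $i^{\ast}$ minimizes $d_i$ over $S$; the only remaining possibility to exclude is a neighbour $j\in\mathcal{N}_{i^{\ast}}\setminus S$ lying strictly closer to $\boldsymbol{\mu}_l$ than $i^{\ast}$, which I would rule out by the usual generic-position assumption on the targets (and, were it to occur, such an agent would already satisfy $l\in C_j$, contradicting $j\notin S$). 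Hence $l\in C_{i^{\ast}}$, and by (\ref{eq10a}) agent $i^{\ast}$ keeps moving to $\boldsymbol{\mu}_l$, while for $j\in S\setminus\{i^{\ast}\}$ one gets $l\notin C_j$ and (\ref{eq10}) redirects $j$ to another target, so the allocation is both consistent and persistent.

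I do not expect a substantive obstacle --- the statement is flagged as trivial --- and the only points needing a line of care are the tie handling in the $\argmin$ and the genuine-partition property of the Voronoi cells in (\ref{eq4}) (no two agents ever exactly equidistant from a target), together with the remark above that an agent outside $S$ cannot momentarily undercut $i^{\ast}$ in distance to $\boldsymbol{\mu}_l$. All of these are covered by a generic-position assumption on the targets and by the time-scale separation of Assumption~1.
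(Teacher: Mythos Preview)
The paper gives no proof of this corollary at all: it simply announces that, under Assumption~1, Corollaries~1 and~2 ``are trivial.'' Your proposal is therefore not competing with any argument in the paper but rather unpacking what the authors left implicit, and the unpacking you give --- solve the linear closed loop, read off $\|\boldsymbol{\mu}_l-\textit{\textbf{x}}_i(t)\|=e^{-k_it}\|\boldsymbol{\mu}_l-\textit{\textbf{x}}_i(0)\|$, compare these finitely many decaying curves, and then check the Voronoi rule (\ref{eq4}) --- is exactly the intended reading.

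One small point you already flag is the only place where your argument is looser than it looks: the step ruling out a neighbour $j\in\mathcal{N}_{i^{\ast}}\setminus S$ that happens to sit closer to $\boldsymbol{\mu}_l$ than $i^{\ast}$. Your claim that ``such an agent would already satisfy $l\in C_j$, contradicting $j\notin S$'' is not quite tight, because $l\in C_j$ does not force $\textit{\textbf{x}}_j^{\ast}=\boldsymbol{\mu}_l$ under (\ref{eq10a}) if $C_j$ contains some other target closer to $j$. The paper does not address this either --- it is absorbed into the informal time-scale separation of Assumption~1 and the tacit genericity you invoke --- so this is not a gap relative to the paper, just a place where neither you nor the authors are fully rigorous.
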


From the above corollaries, the following lemma is obtained.

\begin{lemma}
\rm{When agent} $i\in \nu$ moves using input (\ref{eq7}) with (\ref{eq10}), the position of agent $i\in \nu$, $\textit{\textbf{x}}_i$, exponentially converges to one of the targets for $n=n_t$.
\end{lemma}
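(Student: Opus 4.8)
The plan is to use Assumption~1 together with Corollaries~1 and~2 to reduce the continuous dynamics (\ref{eq6})--(\ref{eq7}) to a discrete event argument, and then to show that the set of covered targets grows to all of $T$ in finite time, after which exponential convergence is immediate. Throughout I take $n=n_t$ and, by Assumption~1, work on the slow time scale on which each reference $\textit{\textbf{x}}_i^{\ast}$ in (\ref{eq10}) is piecewise constant; on each such interval Corollary~1 gives $\textit{\textbf{x}}_i\to\boldsymbol{\mu}_{l_{\ast}}$ exponentially with rate $k_i$, so it suffices to prove that after finitely many switches every reference $\textit{\textbf{x}}_i^{\ast}$ freezes at a target position and that these positions are distinct.

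First I would establish a monotonicity property: once a target $l$ is covered it stays covered. If agent $j$ sits at $\boldsymbol{\mu}_l$ then $\|\boldsymbol{\mu}_l-\textit{\textbf{x}}_j\|=0$, so $l\in C_j$ by (\ref{eq4}) and $j$ keeps $\boldsymbol{\mu}_l$ as reference through (\ref{eq10a}); if a second agent reaches $\boldsymbol{\mu}_l$, Corollary~2 allocates $l$ to the (tie-broken) closest agent, which still covers it, while the displaced agent receives $P_l=1$ through (\ref{eq8})--(\ref{eq9}) and hence drops $l$ from $V_i$ in (\ref{eq11}) and from $W_i$ in (\ref{eq12}). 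Consequently $m(t):=\sum_{l\in T}P_l$ is nondecreasing and bounded above by $n_t$.

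The crux is to show $m(t)$ cannot stall below $n_t$. Suppose $m<n_t$ on a maximal interval. The $m$ covered targets pin down $m$ agents via (\ref{eq10a}), and I would argue that each of the remaining $n-m=n_t-m\ge 1$ agents ends up with $C_i=\emptyset$: an agent whose Voronoi cell contained an uncovered target would, by (\ref{eq10a}) and Corollary~1, converge onto it and raise $m$, a contradiction; an agent whose cell contained only already-covered targets is, after reaching one, displaced by Corollary~2 and has its cell recomputed, so after finitely many such trips its cell is empty. A free agent then follows (\ref{eq10b}) or (\ref{eq10c}) and, by Assumption~1, reaches the chosen target before its reference can change; if that target is already covered, the memory update (\ref{eq8})--(\ref{eq9}) removes it from $V_i$ and $W_i$, so each target is visited uselessly at most once, and since $T$ is finite the agent must in finitely many steps head toward a genuinely uncovered target. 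By Corollary~2 the closest such agent covers it, contradicting maximality; hence $m(t)=n_t$ in finite time.

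Finally, with $m=n_t$ covered targets and $n=n_t$ agents, Corollary~2 rules out two agents sharing a target, so the agents occupy $n_t$ distinct target positions; each reference $\textit{\textbf{x}}_i^{\ast}$ is then a fixed $\boldsymbol{\mu}_l$ and Corollary~1 gives exponential convergence of $\textit{\textbf{x}}_i$ to it with rate $k_i$. I expect the third step to be the main obstacle: ruling out Zeno-type or cyclic switching of references among already-covered or unknown-status targets, which is precisely where Assumption~1 and the $V_i$/$W_i$ bookkeeping of (\ref{eq11})--(\ref{eq12}) must be used with care. A secondary subtlety is communication connectivity --- an agent far from every uncovered target may not immediately know that one exists --- which is absorbed into the $W_i$ case of (\ref{eq10c}), since the agent acquires more coverage information through (\ref{eq8})--(\ref{eq9}) as it moves.
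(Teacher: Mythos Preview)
Your approach is essentially the same as the paper's: invoke Corollary~2 to ensure each target is eventually allocated to a unique agent, invoke Corollary~1 for exponential convergence once the reference is fixed, and argue that an agent deprived of its target picks another uncovered one and that this process terminates because $n=n_t$. The paper's proof is in fact considerably terser than yours---it does not explicitly introduce the covered-target counter $m(t)$, does not separately verify monotonicity, and does not address the cycling or connectivity subtleties you flag in your third and fourth paragraphs; it simply asserts that ``after repeating this process'' convergence follows. Your decomposition into monotonicity, non-stalling of $m(t)$, and final freezing is a more careful unpacking of the same informal argument, and the concerns you raise about Zeno switching and the role of the $V_i$/$W_i$ bookkeeping are real gaps in rigor that the paper leaves implicit under Assumption~1.
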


\begin{proof}
After sufficient time has elapsed, one of the targets is allocated to only one agent from Corollary 2. 
If one of the targets is allocated to agent $i\in \nu$, the position of agent $i\in \nu$ exponentially converges to the position of the target from Corollary 1.
Meanwhile, when agent $i\in \nu$ is deprived of the next target by another agent, agent $i\in \nu$ selects another uncovered target as its next reference position and moves to the reference position. 
After repeating this process, for $n=n_t$, the position of agent $i\in \nu$ exponentially converges to one of the targets.
\end{proof}

Finally, we can derive the following theorem to guarantee the global optimal deployment.

\begin{theorem}
\rm{When agent} $i\in \nu$ moves to the reference position in (\ref{eq10}), the total coverage rate, $P_{\rm{cov}}$, exponentially converges to the global optimum value one.
\end{theorem}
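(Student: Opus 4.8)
The plan is to build directly on Lemma 1 and push it from ``each agent converges to some target'' to ``every target is covered,'' which is exactly what forces $P_{\mathrm{cov}}\to 1$. First I would invoke Lemma 1 to conclude that, under Assumption 1, every agent $i\in\nu$ has $\textit{\textbf{x}}_i$ converging exponentially to $\boldsymbol{\mu}_{l(i)}$ for some target index $l(i)\in T$. The crux is then to argue that the map $i\mapsto l(i)$ is injective once the system has settled: by Corollary 2, a target occupied by two or more agents is assigned (via \eqref{eq4}) to the single closest one, so the losing agent has $C_i=\emptyset$ at that target and, by \eqref{eq10b}--\eqref{eq10c}, re-targets an element of $V_i$ or $W_i$, i.e.\ a target it currently believes uncovered or unknown. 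Since $n=n_t$, an injective map between two finite sets of equal cardinality is a bijection, so after the process stabilizes every target $l\in T$ has exactly one agent $i$ with $\textit{\textbf{x}}_i\to\boldsymbol{\mu}_l$.

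Next I would translate this into the coverage rates. Once $\textit{\textbf{x}}_i$ is within the sensor footprint $D(\textit{\textbf{x}}_i)$ of its assigned $\boldsymbol{\mu}_l$ — which the exponential convergence of Corollary 1 guarantees after finite time, since $D(\cdot)$ has nonzero extent — we get $P_l=1$ from \eqref{eq1}, and with $l\in C_i$ holding for the winning agent this $P_l$ also enters the lower bound sum in \eqref{eq2}. Summing over all $n_t$ targets gives $P_{\mathrm{cov}}^L=1$, and since \eqref{eq2} gives $P_{\mathrm{cov}}^L\le P_{\mathrm{cov}}\le 1$, we obtain $P_{\mathrm{cov}}=1$. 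The convergence is exponential because it is driven entirely by the exponential approach $\textit{\textbf{x}}_i\to\textit{\textbf{x}}_i^{\ast}$ of \eqref{eq7} (Corollary 1): once the final bijection is in place, each agent's residual distance to its target decays like $e^{-k_i t}$, and the (binary) coverage indicator $P_l$ switches to $1$ as soon as that distance drops below the footprint radius, so the ``time to full coverage'' after stabilization is bounded by a term of order $\max_i k_i^{-1}\log(1/\varepsilon_i)$.

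The main obstacle — and the place where the argument is really only a sketch unless one is careful — is showing that the re-targeting process of \eqref{eq10} actually terminates, i.e.\ that the assignment $i\mapsto l(i)$ stabilizes in finite time rather than cycling forever (agent $A$ bumps $B$ off target $1$, $B$ bumps $C$ off target $2$, $C$ bumps $A$ off target $1$, and so on). This is where Assumption 1 does the heavy lifting: it guarantees a timescale separation so that each agent essentially reaches its current reference before any re-targeting event occurs, reducing the dynamics to a discrete sequence of assignments, and one then argues (by a potential/monotonicity argument on, say, the number of covered targets $\sum_l P_l$, which the cut-in rule \eqref{eq10b}--\eqref{eq10c} never decreases and strictly increases whenever an agent moves to a genuinely uncovered target) that this discrete process reaches a fixed point in at most $n$ steps. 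I would state this monotonicity explicitly as the key combinatorial step, note that a fixed point of the discrete assignment dynamics with $n=n_t$ must be a perfect matching (no target can be left uncovered, else the corresponding agent whose $C_i=\emptyset$ would still have a nonempty $V_i$ or $W_i$ and would move, contradicting fixedness), and then hand off to Corollary 1 for the exponential tail.
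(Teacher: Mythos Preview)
Your argument for the case $n=n_t$ follows the same path as the paper's: invoke Lemma~1 to get each agent converging to a target, then conclude $P_{\rm{cov}}^L\to 1$ and hence $P_{\rm{cov}}\to 1$ via (\ref{eq2}). You are in fact more careful than the paper here --- you make the injectivity step (distinct agents end up at distinct targets) explicit, and you flag the termination-of-retargeting issue that the paper largely absorbs into Assumption~1 and the informal argument inside the proof of Lemma~1.

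The gap is that you treat only $n=n_t$. The theorem is stated without that restriction, and the paper's proof handles all three regimes separately: for $n>n_t$ it observes that the same mechanism still drives $n_t$ of the agents onto the $n_t$ targets (the surplus agents are harmless for $P_{\rm{cov}}$); for $n<n_t$ it invokes the persistent-coverage idea of \cite{16}, zeroing out the importance of already-covered targets so that the number of uncovered targets eventually drops to at most $n$, reducing to one of the previous two cases. Your injectivity-to-bijection step hinges explicitly on $|\nu|=|T|$ and does not extend to these regimes as written, so the proposal is incomplete relative to the theorem as stated. Adding a short paragraph for $n>n_t$ (an injection from $\nu$ into $T$ with $n\ge n_t$ forces a surjection onto $T$) and the reduction step for $n<n_t$ would close the gap.
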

\begin{proof}
For $n=n_t$, the position of each agent exponentially converges to one of the targets in $Q$ from Lemma 1.
Similarly, for $n>n_t$, the positions of $n_t$ robots converge to one of the targets.
Thus, the coverage rate $P_{\rm{cov}}$ converges to one since $P_{\rm{cov}}^L\rightarrow 1$.
Moreover, for $n<n_t$, we change the importance of measurement which is covered by agents into zero using the idea in \cite{16}. 
After repeating this process, the number of uncovered targets becomes equal to or less than the number of agents. As a result, we can solve the problem in the same way as $n=n_t$ or $n>n_t$.
\end{proof}

From Theorem  1, in this paper, we consider only the case of the same number of agents and targets in both simulations and experiments in sections 4 and 5.

\begin{algorithm}[!tp]
\caption{: Proposed algorithm for agent $i$}
\begin{algorithmic}[1]
\REQUIRE $Q$, $\nu$, $T$, $D$, $d_c$, $k_i$, $t_L$ ($t_L$: the last time step)
\ENSURE $\textit{\textbf{x}}_i$ $(i=1,\cdots,n)$, $\boldsymbol{\mu}_l$ $(l=1,\cdots,n_t)$
\IF{$t=1,\cdots,t_L$}
\STATE Check $\mathcal{N}_i(t)$ using (\ref{eq3})
\STATE Calculate $C_i(t)$ using (\ref{eq4})
\STATE Obtain $P_l^{\mathcal{N}_i}(t)$ using (\ref{eq8})
\STATE Update $P_l^i(t)$ using (\ref{eq9})
\STATE Calculate $V_i$ and $W_i$ using (\ref{eq11}) and (\ref{eq12})
\STATE Calculate $\textit{\textbf{u}}_i(t)$ using (\ref{eq7}) and (\ref{eq10})
\STATE \textbf{return} $\textit{\textbf{u}}_i(t)$
\ENDIF
\end{algorithmic}
\end{algorithm}

\subsection{Collision avoidance}
In \cite{14}, the collision avoidance method is also considered in discrete space. In this method, each agent is placed within a divided pixel that is two times larger than the size of the agent, and is then passed through the space among other agents. However, the ability of agents to move smoothly toward the target positions can be lost because agent movements are limited in discrete space when collision avoidance is active. Therefore, based on the method outlined in \cite{17}, we propose a modified potential method that balances the global optimal deployment, the collision safety, and the trajectory smoothness.

Since collision avoidance requires maintaining safe distances between the different active agents, we add the repulsive force for each agent and modify the dynamics of (\ref{eq6}) as follows: 
\begin{eqnarray*}
\dot{\textit{\textbf{x}}}_i=\textit{\textbf{u}}_i+\delta \textit{\textbf{u}}_i,
\end{eqnarray*}
where $\delta \textit{\textbf{u}}_i$ is the vector which modifies the control input $\textit{\textbf{u}}_i$ to avoid collision and is given by
\begin{eqnarray}
\delta \textit{\textbf{u}}_i&=&\sum^n_{j=1}L_{ij} k_d^i\exp \left( -\|\textit{\textbf{x}}_i-\textit{\textbf{x}}_j\|^2\right)\frac{\textit{\textbf{x}}_i-\textit{\textbf{x}}_j}{\|\textit{\textbf{x}}_i-\textit{\textbf{x}}_j \|}, \label{eq13} \\
k_d^i&=&
  \begin{cases}
   K_sK_d \ \ \rm{if}\it{}\ C_i=\emptyset \\
   \\
   K_d \ \ \ \ \ \ \rm{otherwise}\\
  \end{cases}, \label{eq14}
\end{eqnarray}
where $K_d$ and $K_s$ represent positive values to avoid collision between different agents and positive values to adjust repulsive force of the agent which moves beyond its Voronoi region, respectively.
$L_{ij}$ is equal to one if agent $j$ exists within certain distance $d_k$ from agent $i$, otherwise it is equal to zero.
As discussed in \cite{18}, an agent can be surrounded by other agents, thereby resulting in deadlock when potential methods are applied. 

To avoid such problems, the repulsive force is reduced by decreasing $K_s$ in (\ref{eq14}) if $C_i=\emptyset$. 
If $K_s$ asymptotically approaches to zero, agent $i$ which owns no target within its Voronoi region exponentially converges to the position of targets because the movement of the agent is not affected by the repulsive force in (\ref{eq13}). 
On the other hand, if the level of collision safety is ensured, we should increase $K_s$ to some extent.
In the tuning, we first adjusted $K_d$ to maintain the minimum distance between different agents.
Moreover, we searched the minimum value of $K_s$ for fixed value $K_d$ to avoid the deadlock while maintaining the level of collision safety.

In this paper, both collision safety and 100 $\%$ coverage are not theoretically guaranteed. However, through numerous simulations and some experiments, we demonstrate 100 $\%$ coverage can be achieved while minimizing collision by adjusting $K_s$ and $K_d$. 

\section{Simulations}
In Theorem 1 to guarantee global optimal coverage, collision avoidance among agents is not considered. Therefore, in this section, we confirm the global optimal deployment with collision safety of the proposed algorithm through large-scale simulations with 100 trials.

\subsection{Simulation conditions}
Simulation conditions are shown in table 1. In the simulations, the number of agents matches that of targets. The deployments of targets are randomly set for each trial by generating each target from $20\times20$ grids in the coverage region given by
$$
q\in Q:=\{ (x,y)\mid -10\le x \le 10,\ -10\le y \le 10\}.
\eqno{}
$$
The size of the agents is set to that of the robots used in our experiment and the collision detection distance is set considering the robot size. 

\begin{table}[!bp]
\tbl{Simulation conditions}
{\begin{tabular}[l]{@{}lccc}\toprule
  Variable & Symbol & Value\\
\colrule
  Number of agents & $n$ & 100  \\
  Number of targets & $n_t$ & 100  \\
  Coverage region size [m]$\times$[m] & - & 20$\times$20  \\  
  Time step[s] & $\Delta t$ & 0.02  \\  
  Max speed of agent $i\in \nu$ [m/s] & - & 5.0  \\  
  Visual sensor range of agent $i\in \nu$ [m]$\times$[m] & $D({\textit{\textbf{x}}_i})$ & 1.0$\times$1.0  \\  
  Communication range [m] & $d_c$ & 10.0  \\  
  Agent size [m]$\times$[m] & - & 0.3$\times$0.3  \\  
  Collision detection distance [m] & - & 0.3  \\  
  Distance to activate collision avoidance [m] & $d_k$ & 0.55  \\     
  Parameters for collision avoidance & $K_d$ & 8.0$\times10^2$  \\ 
  Parameters for collision avoidance & $K_s$ & 0.35  \\                    
\botrule
\end{tabular}}
\label{symbols}
\end{table}

\subsection{Simulation results}
Portions of 100 simulation results from the Lloyd algorithm and our proposed algorithm are shown in figures \ref{fig2} and \ref{fig3}, respectively. 
From figure \ref{fig2b}, we can see that when the Lloyd algorithm is used, some agents cannot move to uncovered targets beyond their Voronoi regions since they own no target within their Voronoi regions.
As a result, a lot of targets remain uncovered. 
Meanwhile, figures \ref{fig3b} and \ref{fig3c} show that when our proposed algorithm is used, agents can move beyond their Voronoi regions even if they own no target within their Voronoi regions.
As a result, all the targets are covered by all the deployed agents as shown in figure \ref{fig3d}.

In order to confirm the global optimality of our proposed algorithm, the total coverage rate results are shown in figure \ref{fig4}.
From the results, it can be seen that total coverage rates of the proposed algorithm converge to 100 $\%$ in all cases as shown in figure \ref{fig4b}, whereas the rates of the Lloyd algorithm do not reach 100 $\%$ in all cases as shown in figure \ref{fig4a}.

\begin{figure}[!tp]
\centering
\subfigure[$t=0.0$ s]{
\includegraphics[width=7cm]{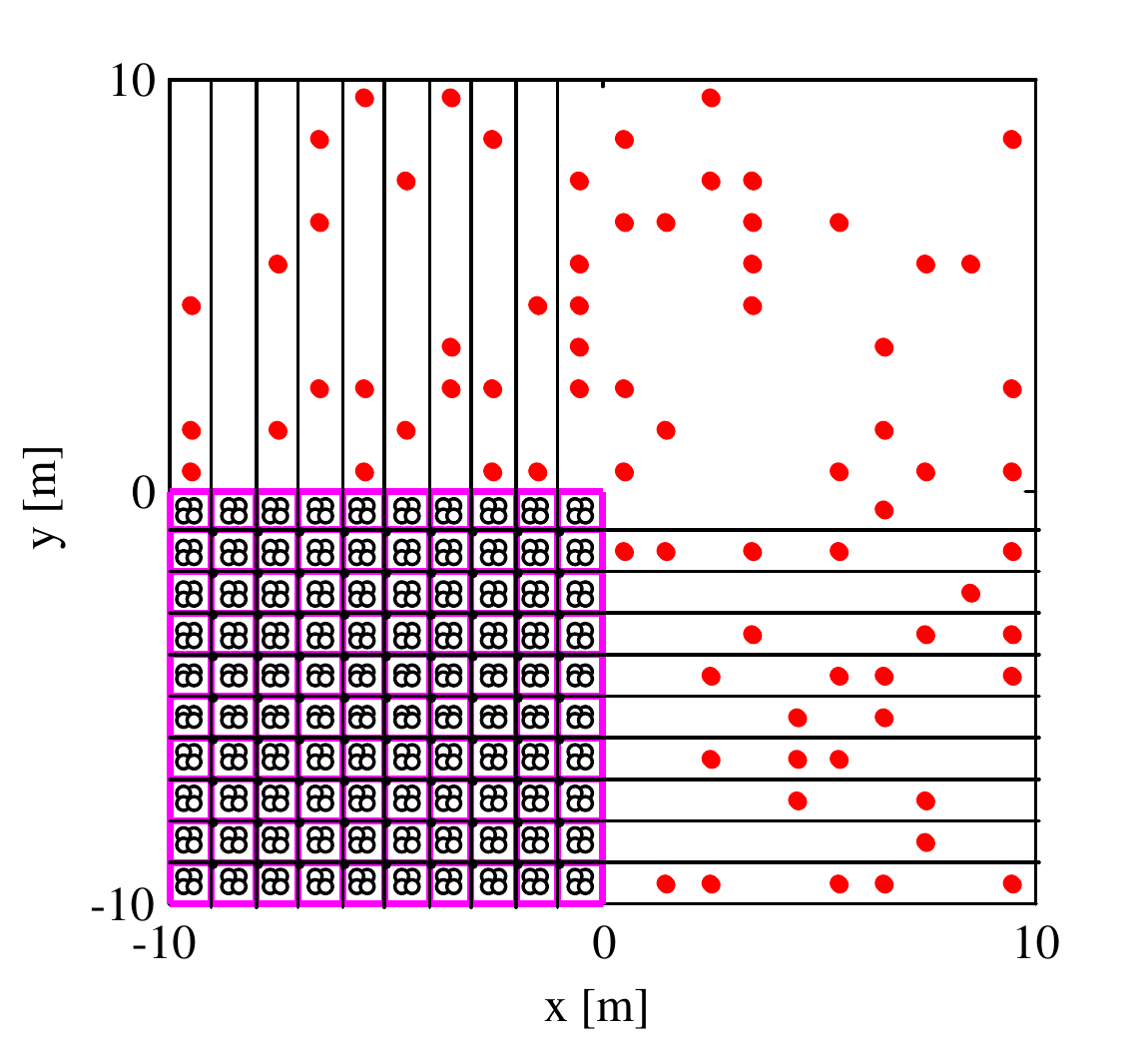}
\label{fig2a}}
\subfigure[$t=10.0$ s]{
\includegraphics[width=7cm]{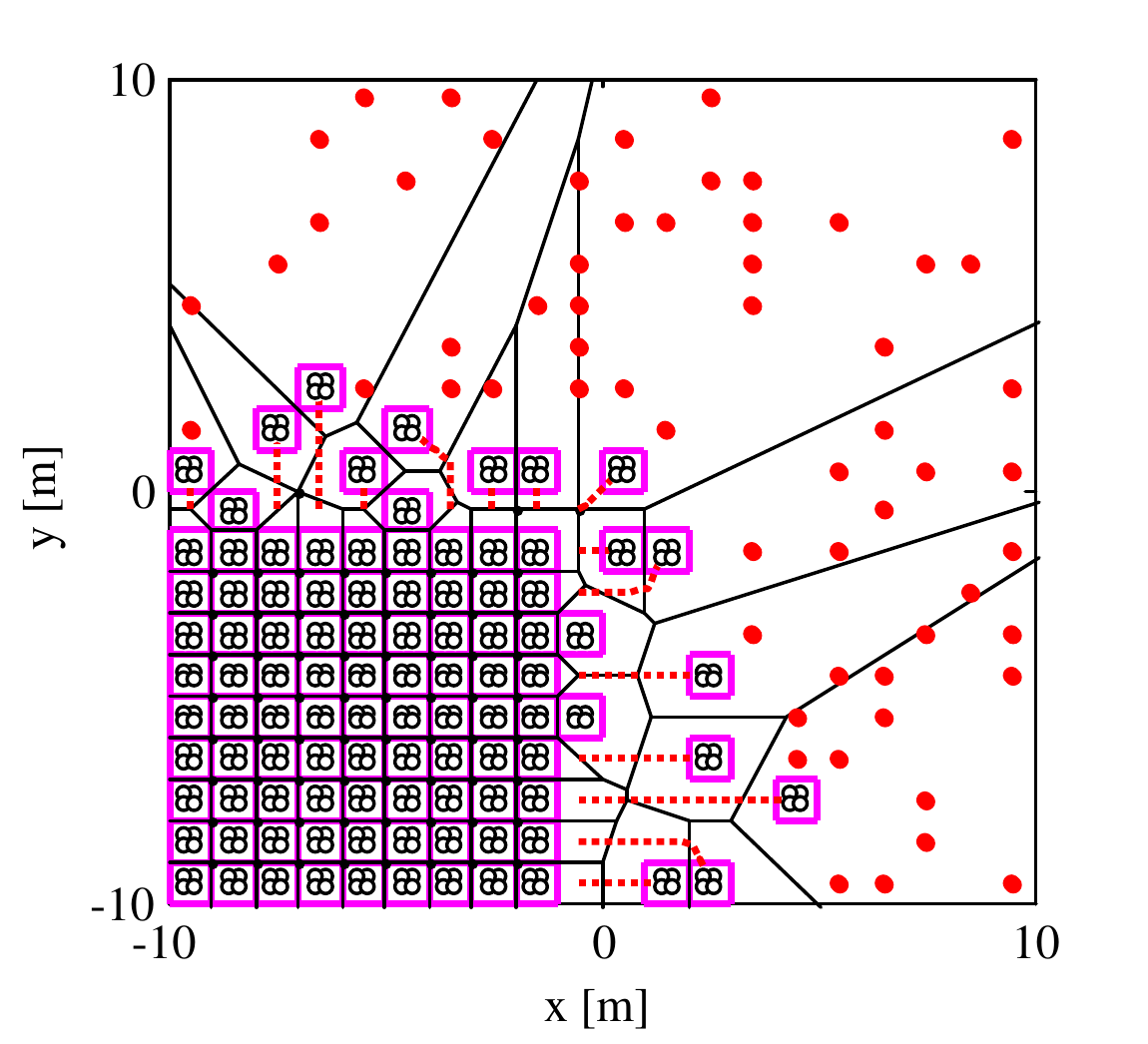}
\label{fig2b}}
\caption{Deployment changes of 100 agents by applying the Lloyd algorithm. Solid lines, square frames, dots, and dashed lines represent Voronoi partitions, visual sensor ranges, target positions, and trajectories of agents up to the time, respectively.}
\label{fig2}
\end{figure}
 
\begin{figure}[!bp]
\centering
\subfigure[$t=0.0$ s]{
\includegraphics[width=7cm]{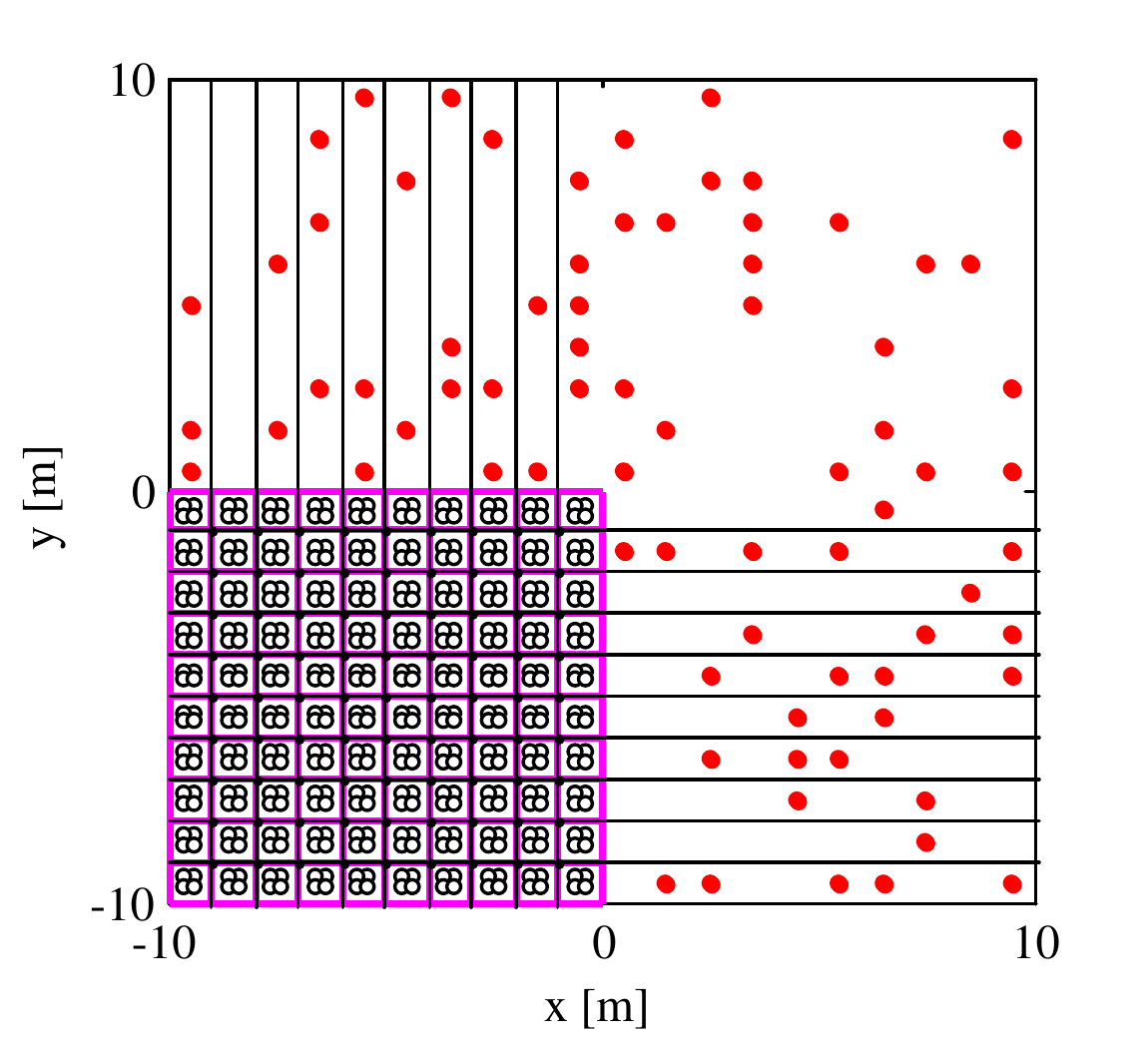}
\label{fig3a}}
\subfigure[$t=2.0$ s]{
\includegraphics[width=7cm]{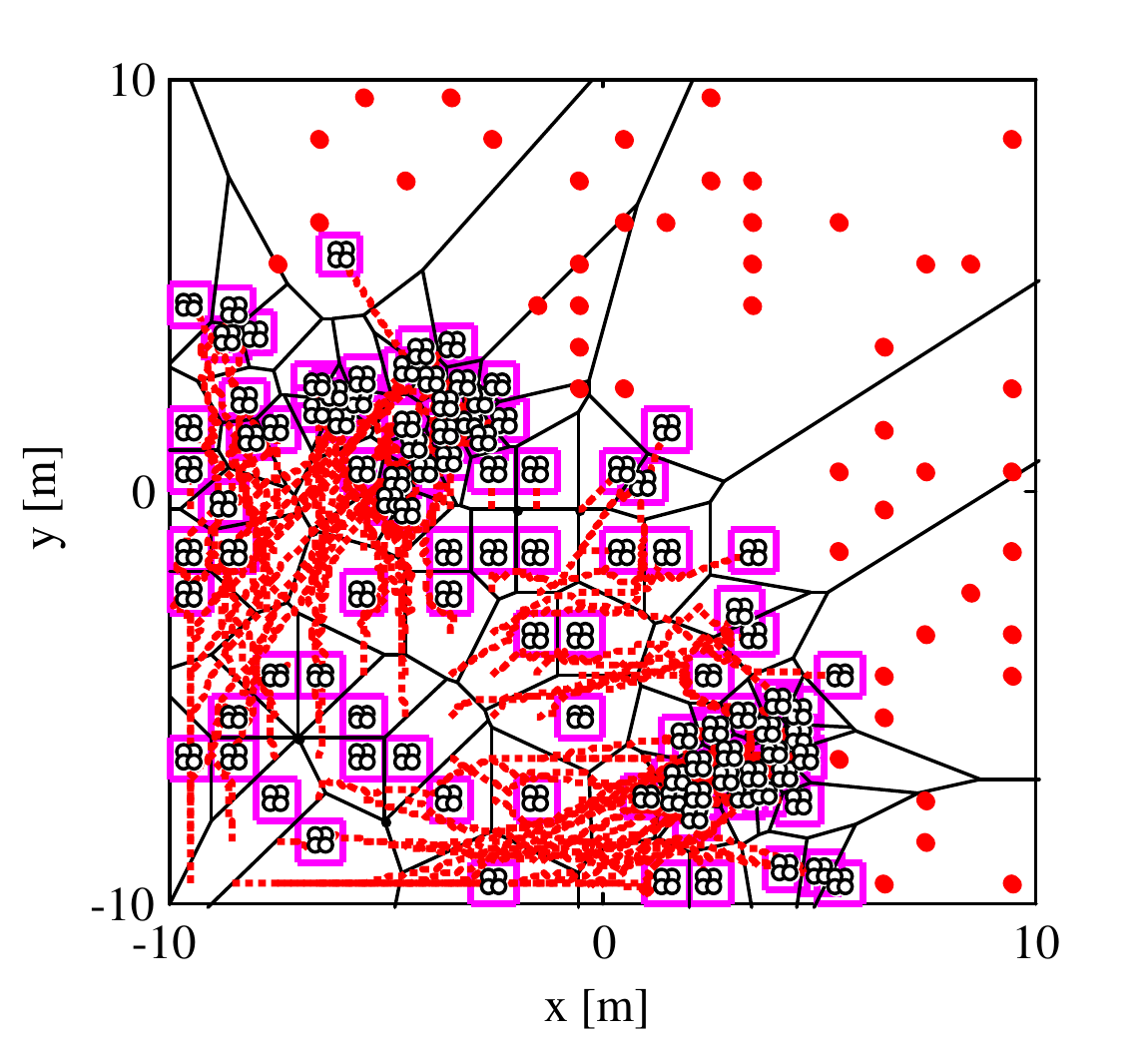}
\label{fig3b}}
\subfigure[$t=6.0$ s]{
\includegraphics[width=7cm]{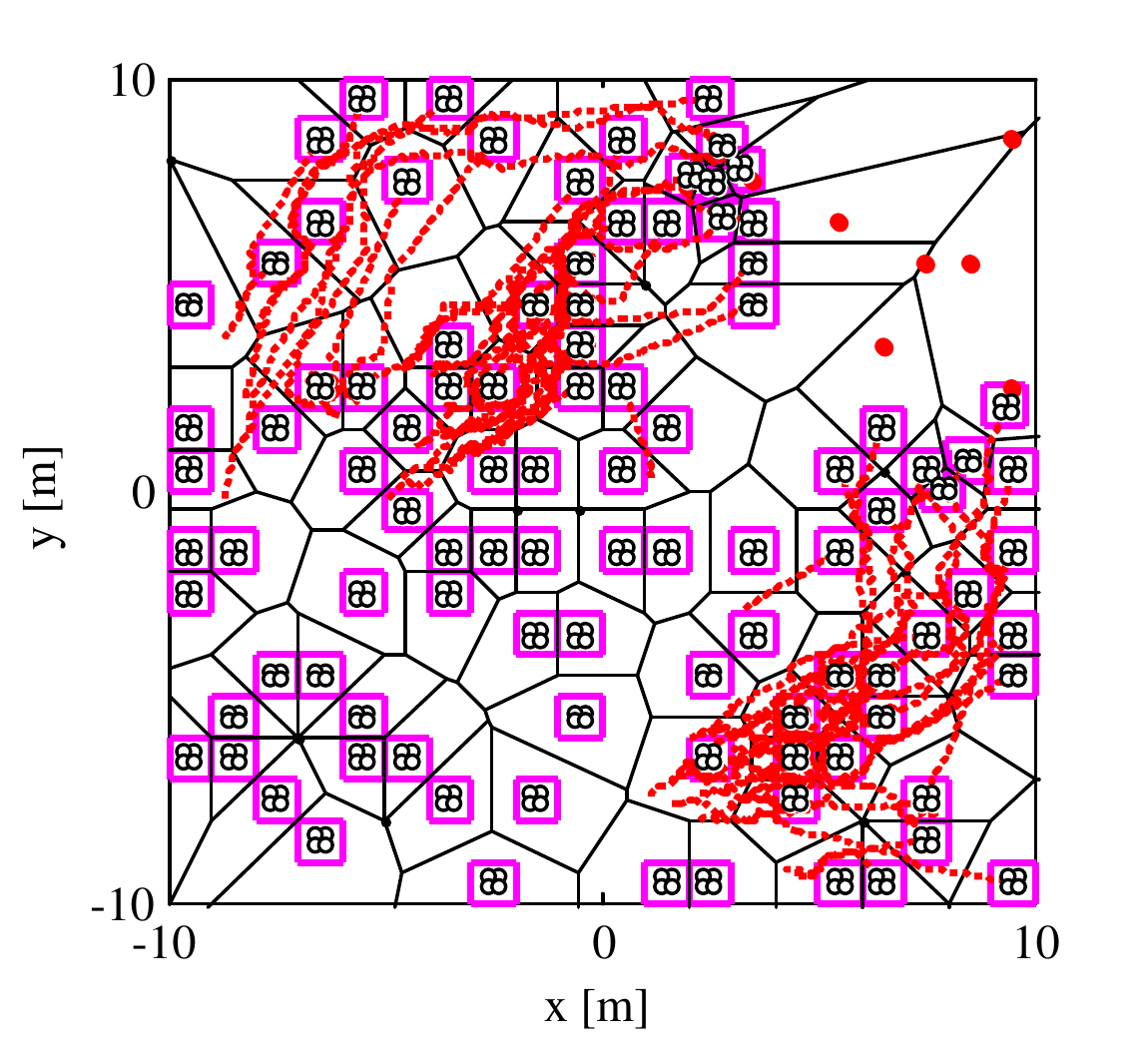}
\label{fig3c}}
\subfigure[$t=10.0$ s]{
\includegraphics[width=7cm]{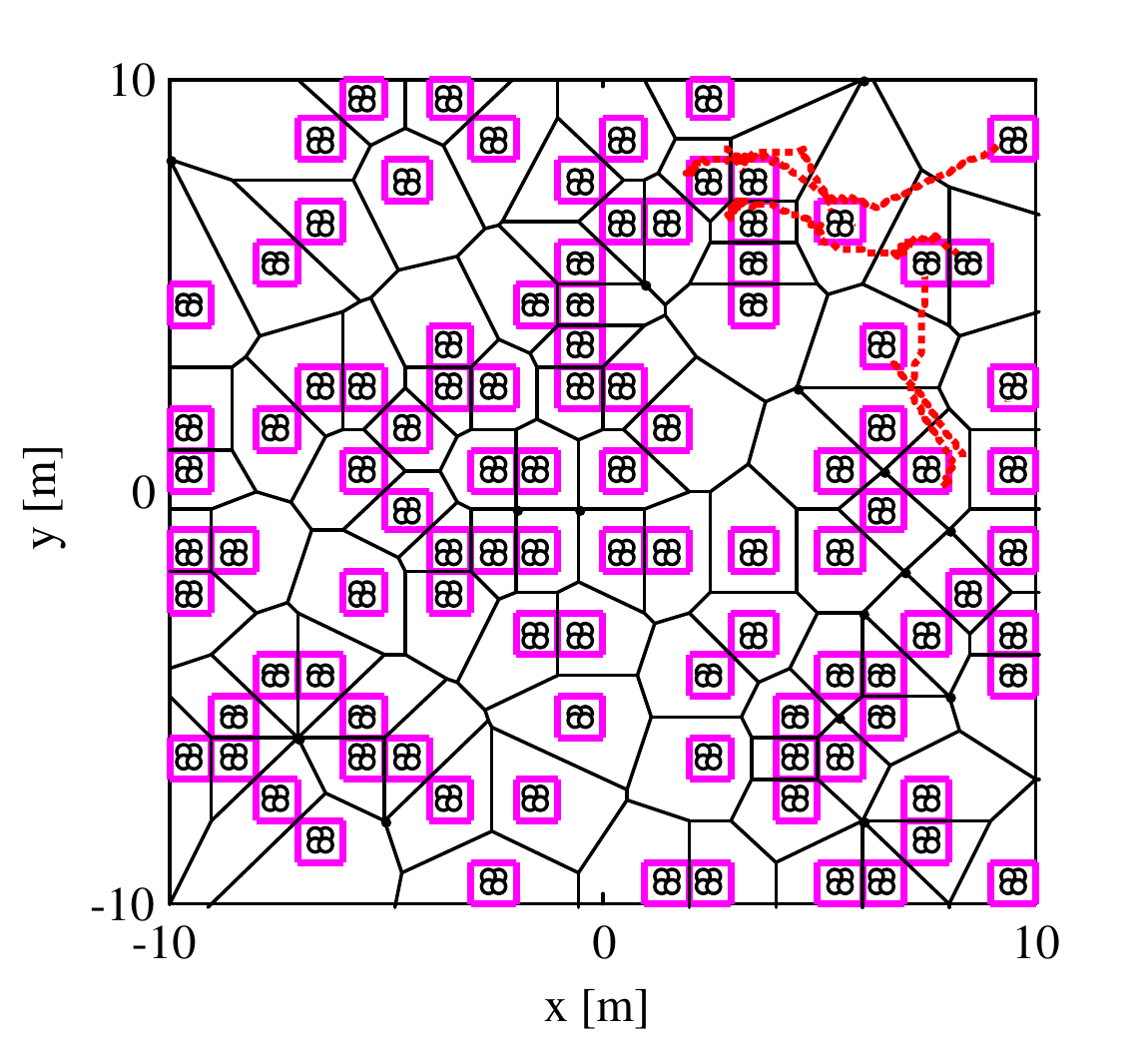}
\label{fig3d}}
\caption{Deployment changes of 100 agents by applying the proposed algorithm.
Solid lines, square frames, dots, and dashed lines represent Voronoi partitions, visual sensor ranges, target positions, and trajectories of agents up to the time, respectively.}
\label{fig3}
\end{figure}
 
\begin{figure}[!tp]
\centering
\subfigure[Lloyd algorithm]{
\includegraphics[width=7cm]{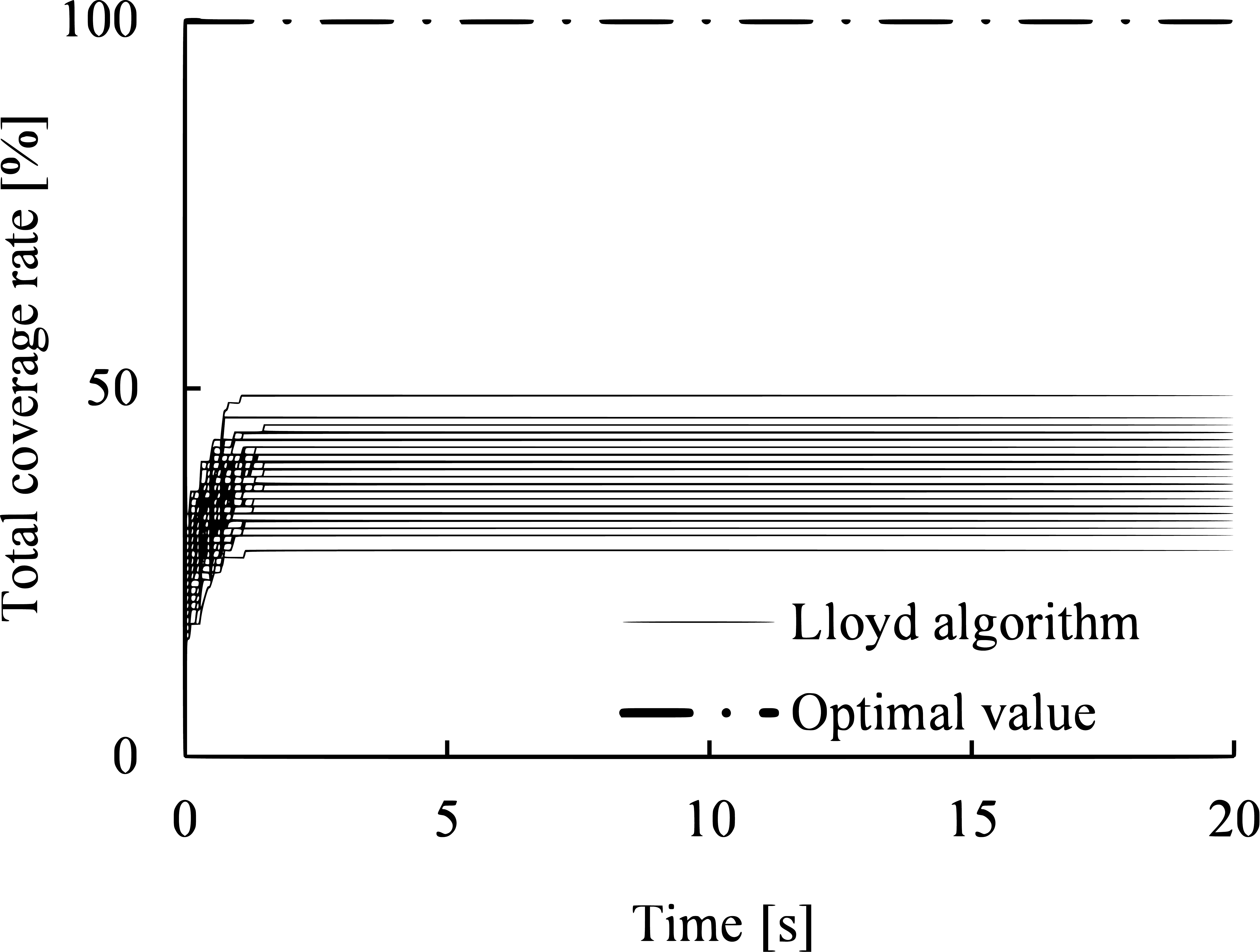}
\label{fig4a}}
\subfigure[Proposed algorithm]{
\includegraphics[width=7cm]{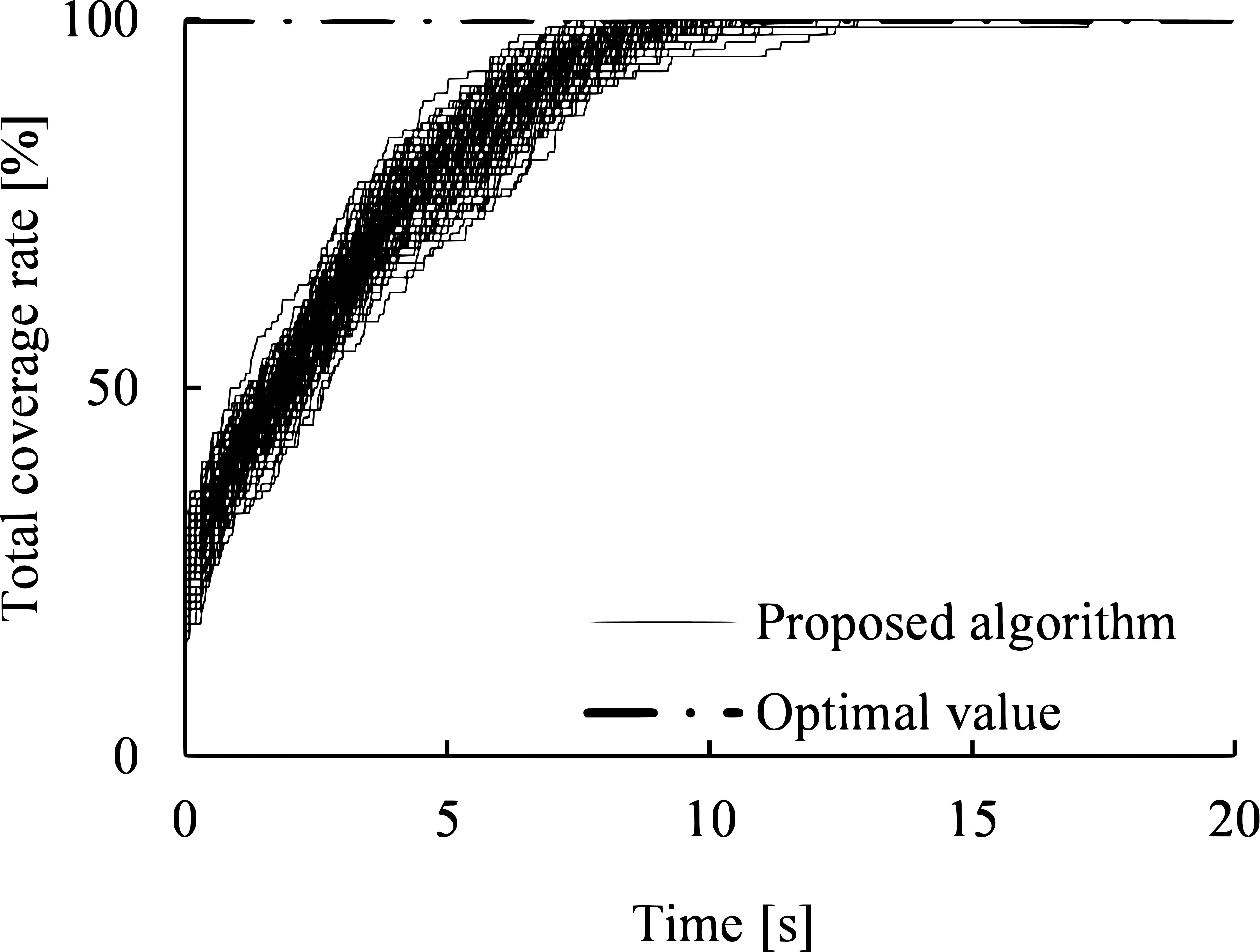}
\label{fig4b}}
\caption{Total coverage rate comparisons between the Lloyd and our proposed algorithms covering 100 simulations.}
\label{fig4}
\end{figure}

\begin{figure}[!tp]
\centering
\subfigure[Lloyd algorithm]{
\includegraphics[width=7cm]{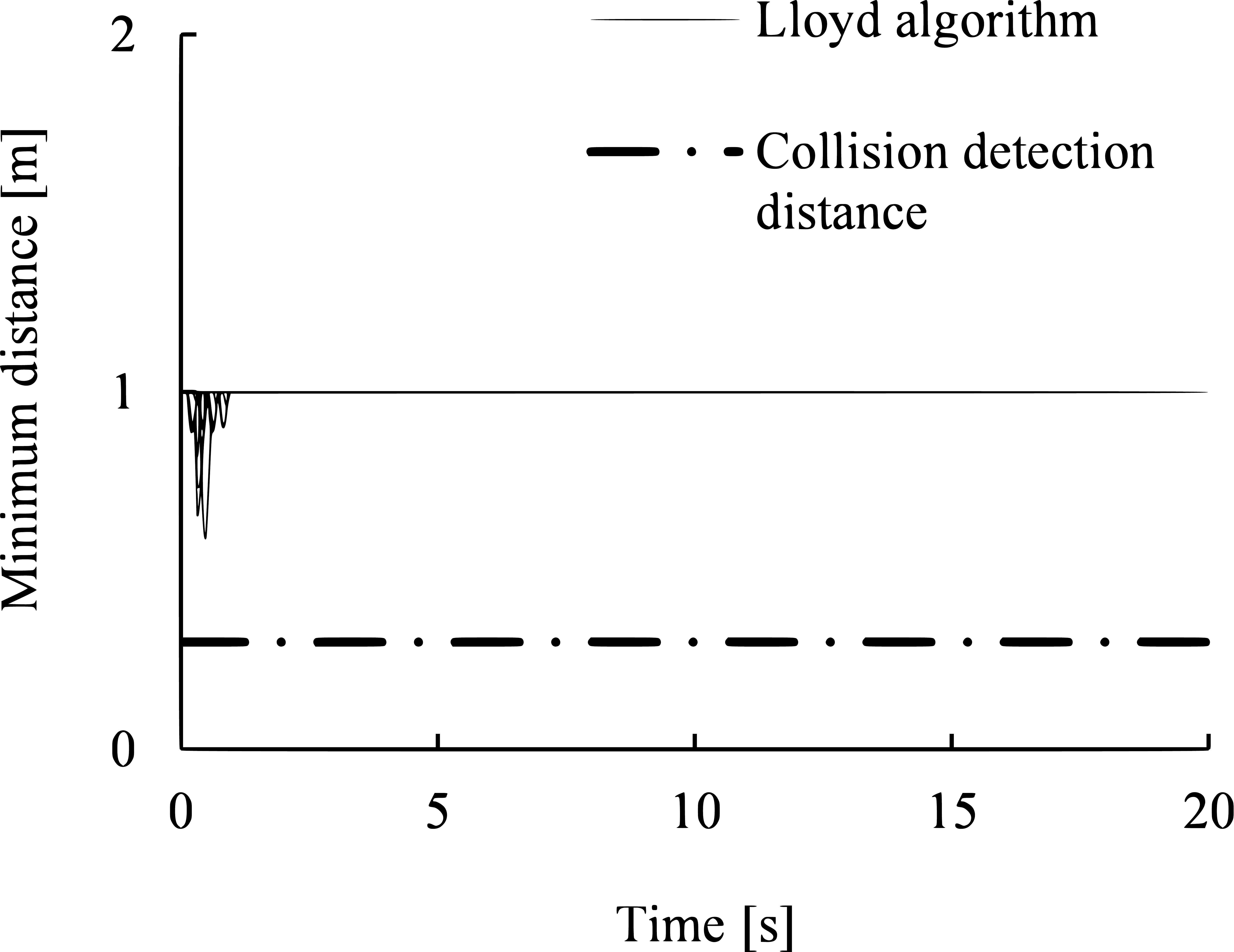}
\label{fig5a}}
\subfigure[Proposed algorithm]{
\includegraphics[width=7cm]{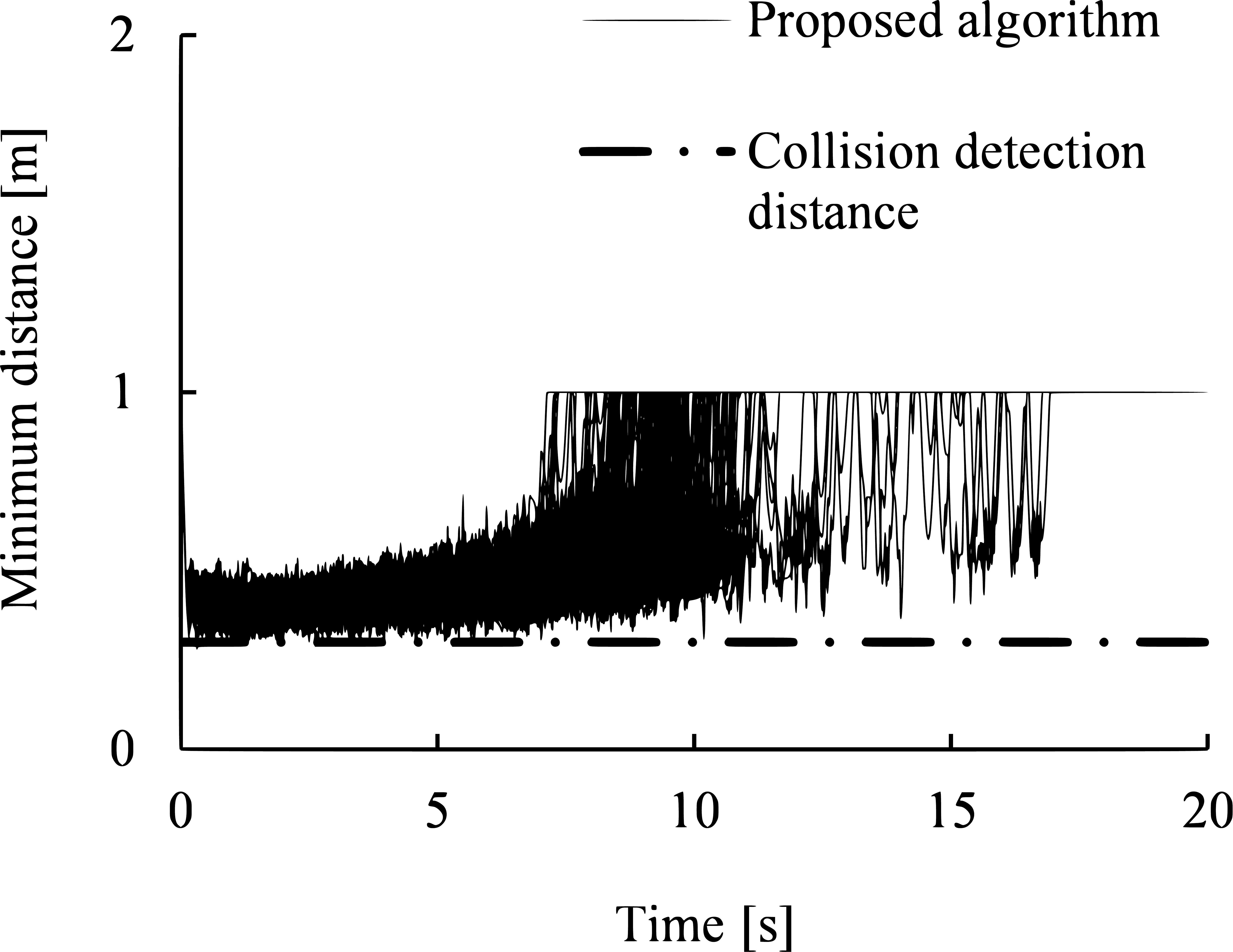}
\label{fig5b}}
\caption{Time-series of minimum distances among 100 agents covering 100 simulations.}
\label{fig5}
\end{figure}

In addition, to confirm the safety of proposed collision avoidance method, the minimum distances between agents are shown in figure \ref{fig5}.
From the results, the minimum distances between agents are mostly longer than the collision detection distance.


\section{Experiment}
In this section, we will demonstrate the use of the proposed algorithm through the experiment with switching positions of targets using eight aerial robots. 
We will first explain the experimental configuration used to control multiple aerial robots, and then show the experimental conditions and results. 
Based on those results, we confirmed the global optimal deployment, collision safety and trajectory smoothness under the influences of wind disturbance by aerial robots and the position control errors. 

\subsection{Experimental configuration and conditions}
The Robot Operating System (ROS) was utilized to realize coverage control system by multiple robots. 
As for the robots, we used Bebop 2 drones (Parrot S.A., Paris, France), whose control package is available as an open-source software package in \cite{19}.


The positions of robots were obtained using an OptiTrack Prime 17W motion capture system (Natural Point, Inc., Corvallis, OR) operating at 120 Hz. 
The measurement data were sent a control personal computer (PC) with 8-core Intel$^{\textregistered}$ Core$^{\texttrademark}$ i7 (2.80GHz), 32 GB of RAM, by using the Virtual-Reality Peripheral Network (VRPN) protocol and the ROS node client, as described in \cite{20}. 

Using positions of neighboring agents, each robot calculates index set of neighborhood agents, index set of assigned targets, gets the coverage rate of each target from neighborhood agents, and then calculates control input using (\ref{eq7}) and (\ref{eq10}). 
Since the dynamics of the robot follows the dynamics of the three-dimensional rigid body, each aerial robot is individually controlled to move to its target position using the individual controller as described in \cite{21}.
In more detail, the individual control is constructed using position controller with P controller, velocity controller with PI controller, attitude controller with inner model controller considering dead time of the control input. 
Control inputs calculated in the control PC were sent via the router by constructing a communication system based on \cite{22} in which all the robots could receive their control inputs. 

\begin{table}[!tp]
\tbl{Experimental conditions}
{\begin{tabular}[l]{@{}lccc}\toprule
  Variable & Symbol & Value\\
\colrule
  Number of robots & $n$ & 8  \\
  Number of targets & $n_t$ & 8  \\
  Coverage region size [m]$\times$[m] & - & 10.0$\times$5.0  \\  
  Time step[s] & $\Delta t$ & 0.02  \\  
  Max speed of robot $i\in \nu$ [m/s] & - & 1.5  \\  
  Visual sensor range of robot $i\in \nu$ [m]$\times$[m] & $D({\textit{\textbf{x}}_i})$ & 1.0$\times$1.0  \\  
  Communication range [m] & $d_c$ & 5.0  \\  
  Robot size [m]$\times$[m] & - & 0.3$\times$0.3  \\  
  Collision detection distance [m] & - & 0.3  \\  
  Distance to activate collision avoidance [m] & $d_k$ & 1.0 \\     
  Parameters for collision avoidance & $K_d$ & 50.0  \\ 
  Parameters for collision avoidance & $K_s$ & 0.35  \\       
  Propotional gain of position controller \cite{21}& - & 1.0  \\    
  Propotional gain of velocity controller \cite{21}& - & 0.15  \\        
  Integral gain of velocity controller \cite{21}& - & 0.04  \\                 
\botrule
\end{tabular}}
\label{symbols}
\end{table}

The proposed algorithm was confirmed in an indoor experiment with the experimental conditions in table 2. 
The positions of robots were controlled by the control PC at 50 Hz in order to consider the communication time between the PC and the robots. The coverage region $Q$ was given by
$$
q\in Q:=\{(x,y)\mid -5\le x\le5, -2.5\le y\le2.5 \},
\eqno{}
$$
where $x$- and $y$-axis represent the longitudinal and lateral axes, respectively. In order to demonstrate the effectiveness of the algorithm, we consider 12 cases with different target positions to be changed in order.

\subsection{Experimental results}
Sequential shots of four different coverage problems are shown in figure \ref{fig6}.
The others are omitted due to space limitations. Top views of the deployments obtained are shown in figure. \ref{fig7}. 
From these results, it can be seen that all the targets were equally divided into each Voronoi region and captured within the visual sensor ranges of the eight robots. 
Additionally, the aerial robot trajectories show that the robots could reach their target positions smoothly.

In order to confirm the global optimality, the coverage rates for 12 experiments are shown in figure \ref{fig8}. From the results obtained, we can see that the total coverage rate converged to 100 $\%$ in all cases by applying the proposed algorithm.

Moreover, figure \ref{fig9} shows time series of minimum distances between different robots. From the results, the minimum distances were always longer than the collision detection distance for all the cases.

\begin{figure}[!tp]
\centering
\subfigure[Case. 1: from dot targets to cross targets]{
\includegraphics[width=6cm]{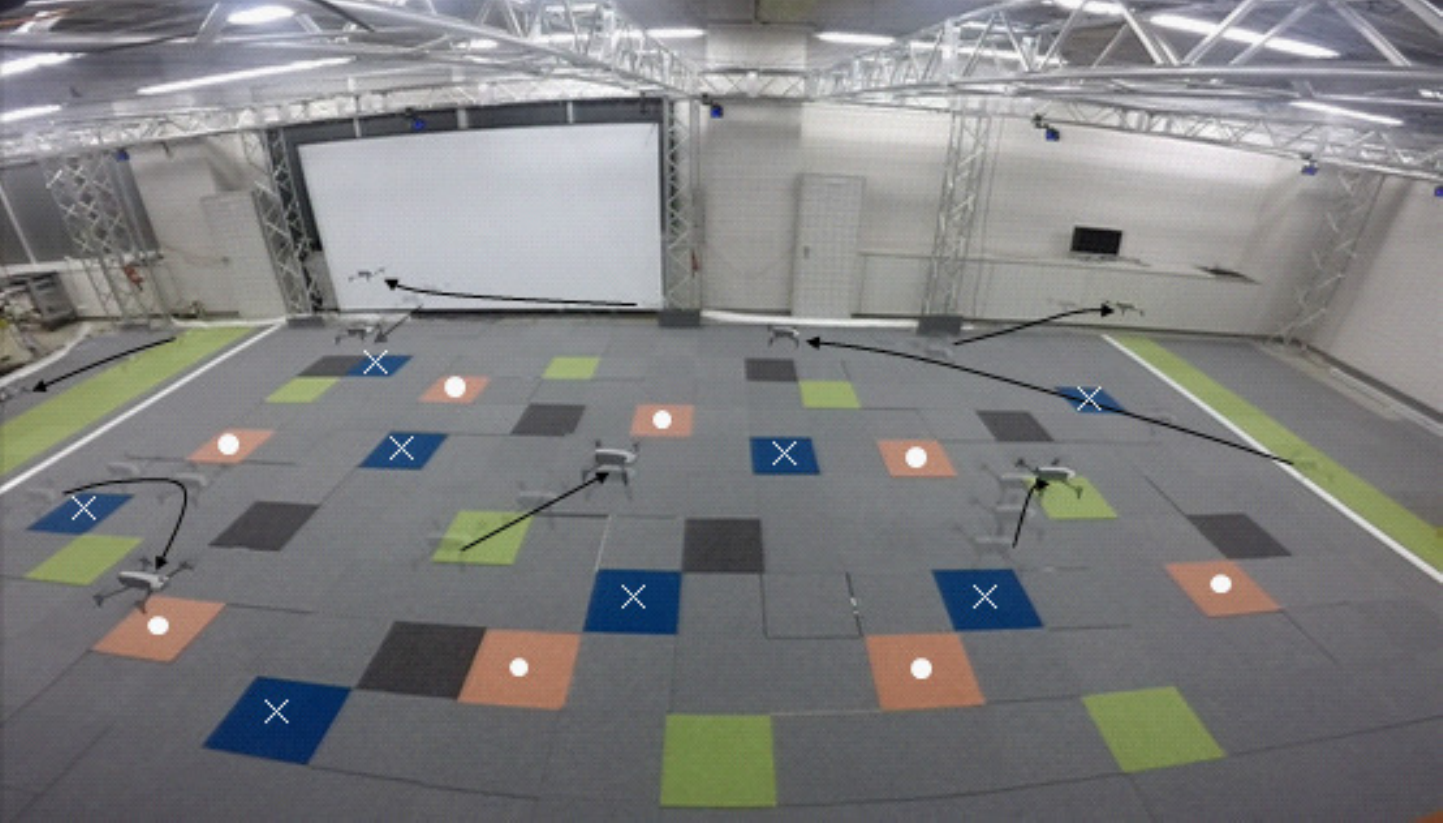}
\label{case1}}
\subfigure[Case. 2: from cross targets to triangle targets]{
\includegraphics[width=6cm]{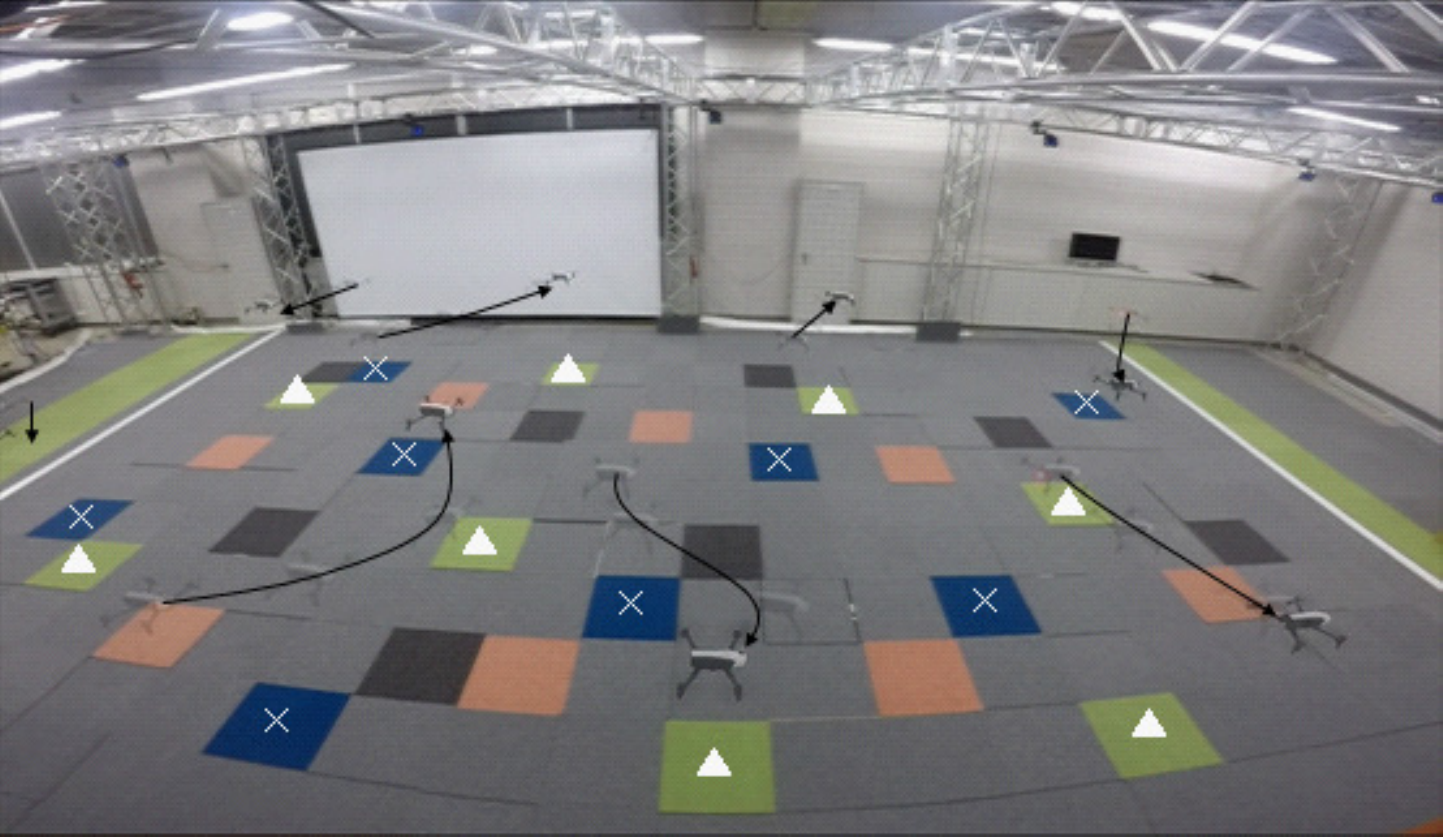}
\label{case2}}
\subfigure[Case. 3: from triangle targets to square targets]{
\includegraphics[width=6cm]{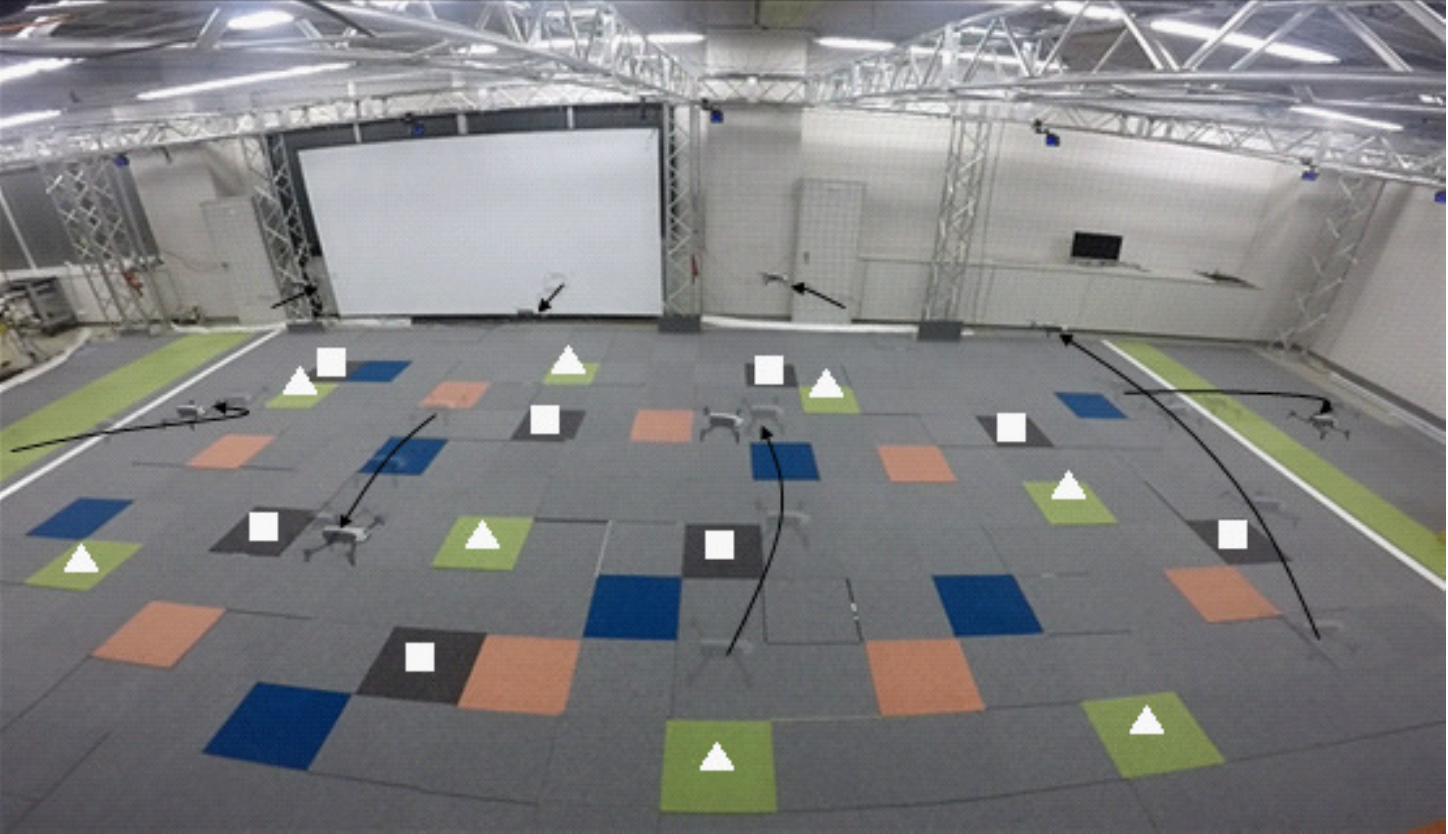}
\label{case3}}
\subfigure[Case. 4: from square targets to dot targets]{
\includegraphics[width=6cm]{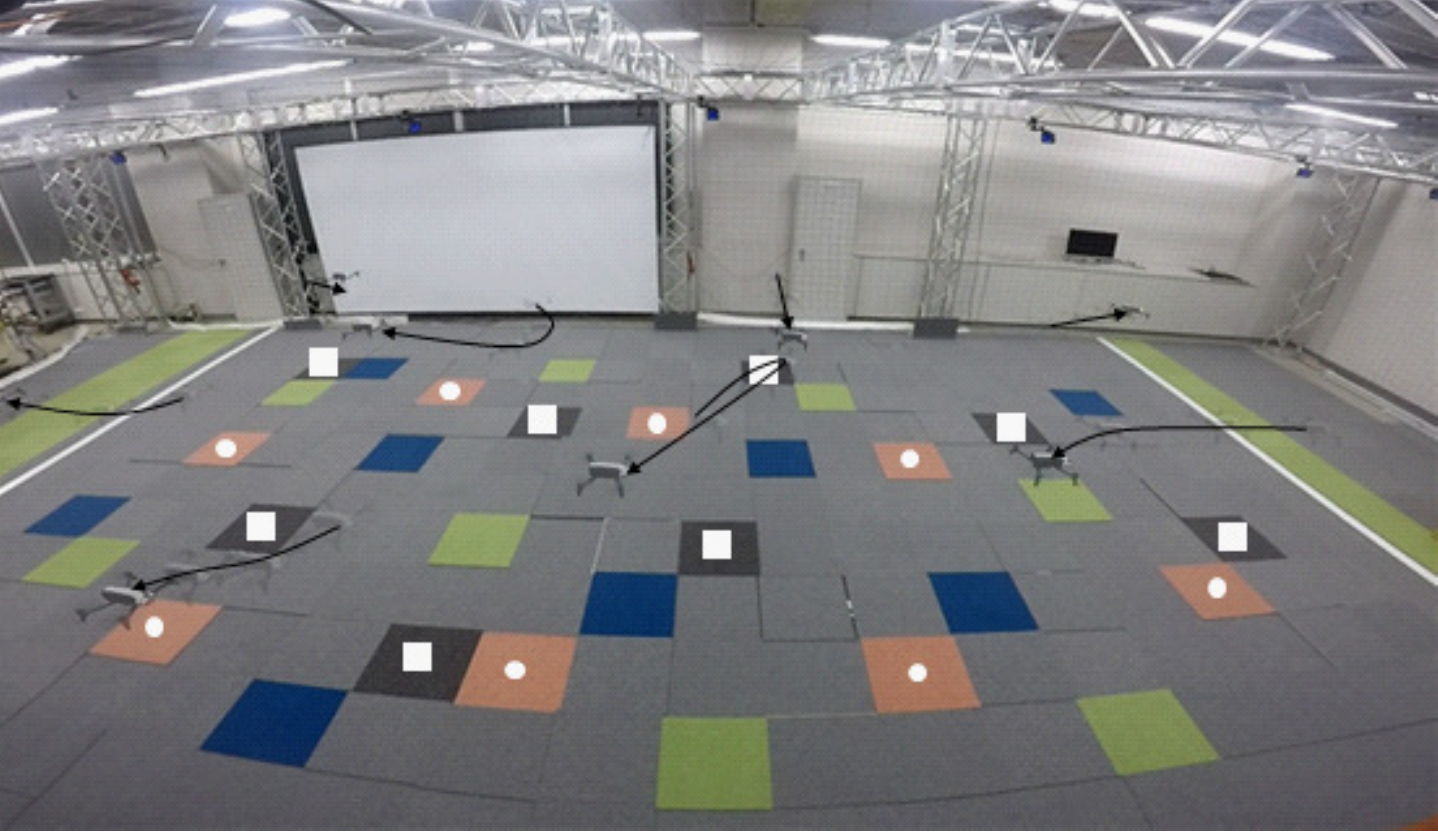}
\label{case4}}
\caption{Four sequential shots of the four cases of target changes in the indoor experiment by applying the proposed algorithm. The colored areas represent targets. Arrows show the trajectories of the robots.}
\label{fig6}
\end{figure}

\begin{figure}[!tp]
\centering
\subfigure[Case. 1]{
\includegraphics[width=6cm]{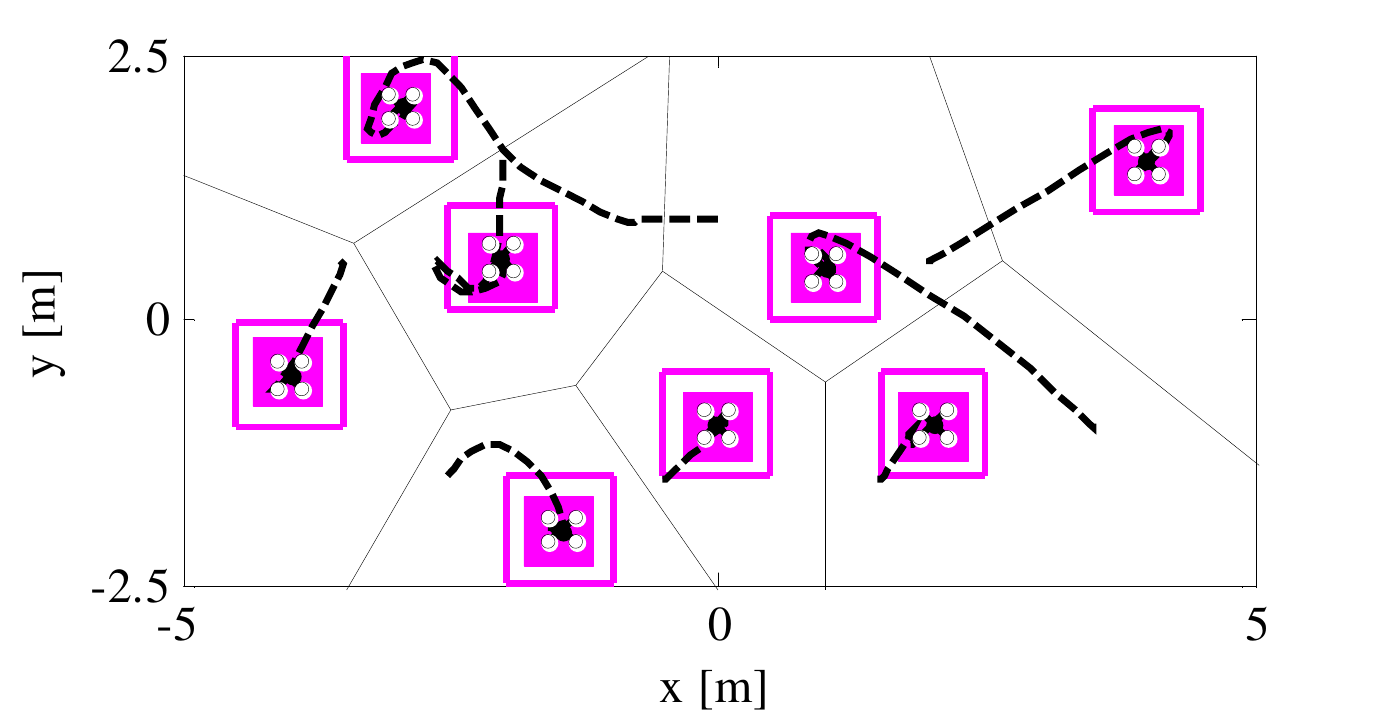}
\label{case1}}
\subfigure[Case. 2]{
\includegraphics[width=6cm]{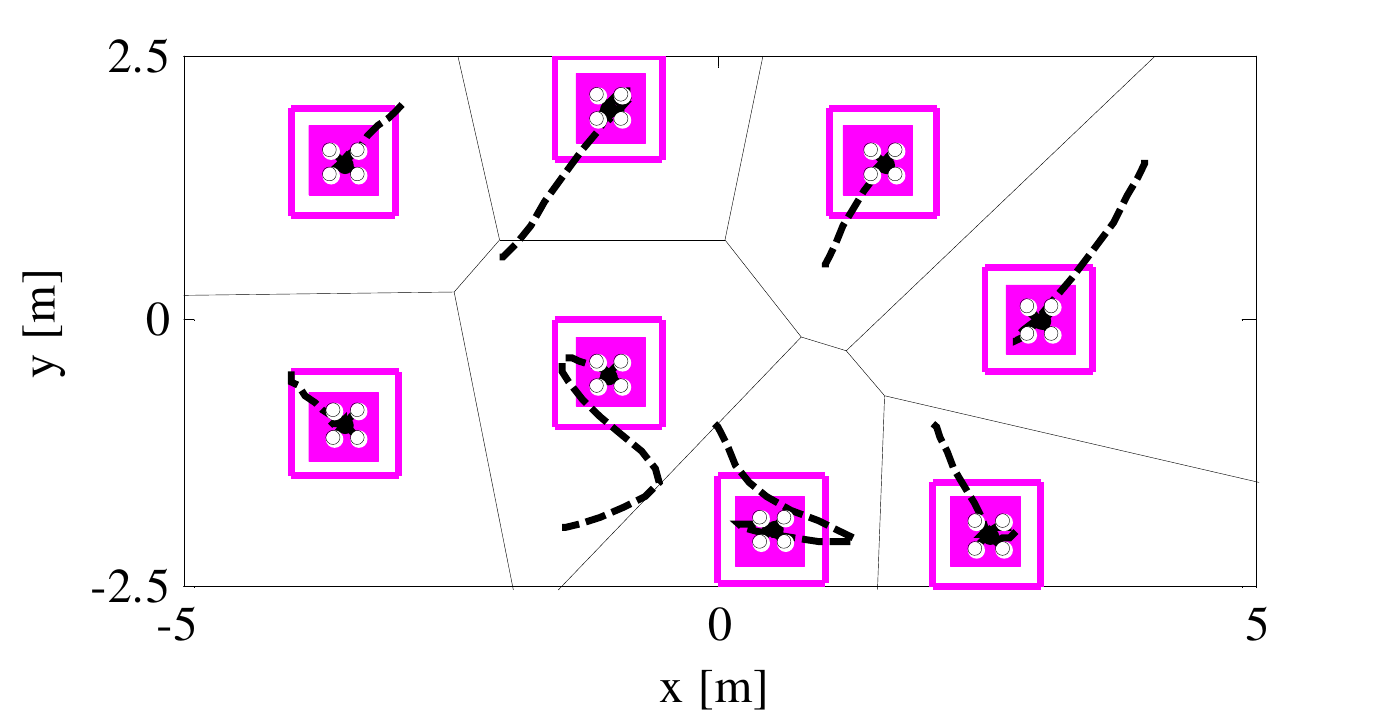}
\label{case2}}
\subfigure[Case. 3]{
\includegraphics[width=6cm]{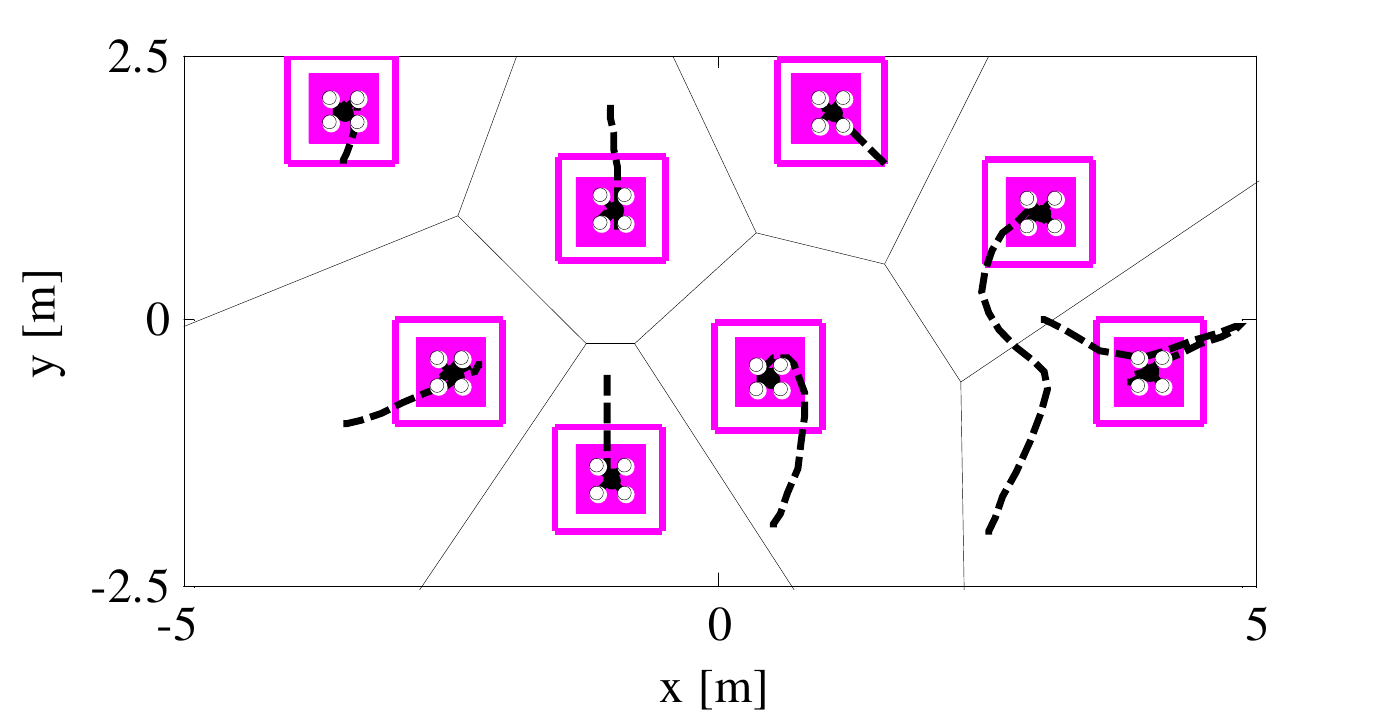}
\label{case3}}
\subfigure[Case. 4]{
\includegraphics[width=6cm]{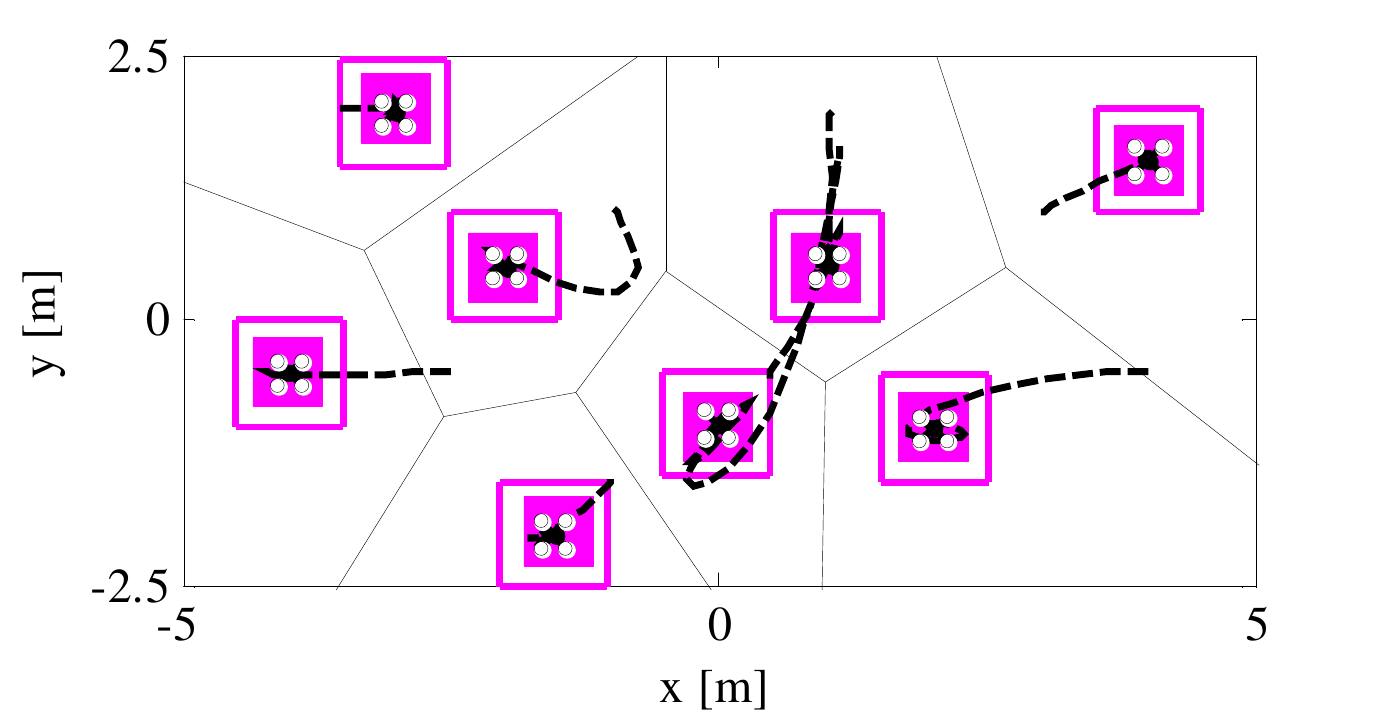}
\label{case4}}
\caption{Deployment changes of 8 robots by applying the proposed algorithm. Solid lines, square frames, filled squares, and dashed lines represent Voronoi partitions, sensor ranges, targets, trajectories of robots in the latest 10 s, respectively.}
\label{fig7}
\end{figure}

\begin{figure}[!tp]
\vspace{4mm}
\begin{center}
\includegraphics[width=9cm]{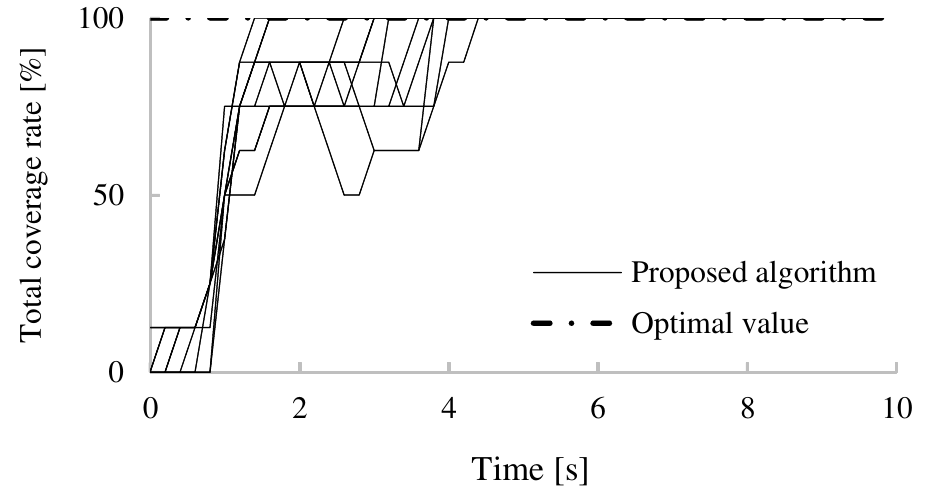}
\caption{Total coverage rates for 12 cases in the indoor experiment.}
\label{fig8}
\end{center}
\end{figure}

\begin{figure}[!tp]
\vspace{4mm}
\begin{center}
\includegraphics[width=9cm]{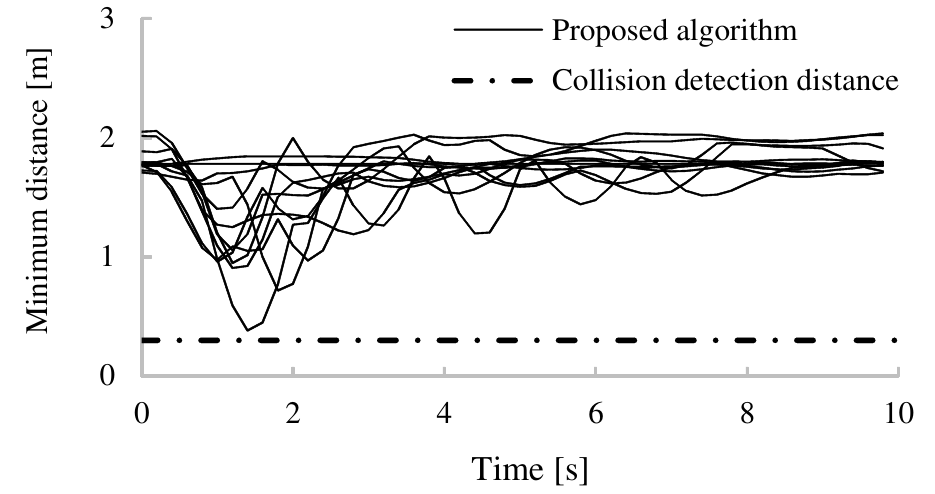}
\caption{Time-series of minimum distances for 12 cases in the indoor experiment.}
\label{fig9}
\end{center}
\end{figure}

\subsection{Discussion}

In this section, we discuss the remaining challenges and the future potential of the proposed algorithm.

In the current system, the positions of robots are obtained using a motion capture system.
Meanwhile, in a purely distributed system, each robot should estimate its own position using onboard sensors.
We should confirm the effectiveness of our algorithm under the influences of the uncertainties in the estimated positions.

Moreover, our algorithm requires some improvements when applying the coverage problem in three-dimensional space.
Cut-in algorithm in (\ref{eq8}), (\ref{eq9}) and (\ref{eq10}) is applicable regardless of the dimension.
However, the target index set $C_i$ in (\ref{eq4}) should be modified considering not only the positions of robots but also directions of cameras as described in \cite{8}.

Despite these unsolved challenges, the proposed algorithm has a potential to be promising solution for stable operation of the large-scale monitoring systems.
Even if some robots did not work owing to fuel exhaustion or sensor failure, our algorithm enabled remaining sound robots to maximize the total coverage performance.

\section{Conclusion}
In this paper, we propose a global optimal coverage control algorithm with collision avoidance that enables multiple robots to move smoothly in continuous space and deterministically converge to a global optimal deployment. 
We prove that the robots deterministically moves to the global optimal deployment by applying our coverage control algorithm without considering collision avoidance. 
In addition, we integrate a modified potential method into the algorithm for collision avoidance, and then show that the proposed algorithm can derive global optimal deployments with a sufficient level of collision safety through numerical simulations with randomly arranged targets. 
Finally, we demonstrated the effectiveness of the proposed algorithm by experiments using multiple aerial robots and confirmed the global optimal deployment, collision safety and trajectory smoothness even under the influences of wind disturbance by aerial robots and the position control errors. 
The limitation of the current system is the use of a motion capture system, so we will implement localization using onboard cameras.
Moreover, we will modify our algorithm for the coverage problem in three-dimensional space.
In the future, we plan to design the convergence speed to the global optimal deployment in related to the energy consumption.

\section*{Acknowledgement}
This is an original manuscript of an article published by Taylor \& Francis in ADVANCED ROBOTICS on July 8, 2019, available online at: http://www.tandfonline.com/doi/10.1080/01691864.2019.1637777.


\begin{thebibliography}{99}

\bibitem{1}
Li J, Li K, Zhu W. Improving sensing coverage of wireless sensor networks by employing mobile robots, \it{IEEE International Conference on Robotics and Biomemetics (ROBIO)}\rm{}, pp.899-903, 2007.

\bibitem{2}
Wang X, Wang S, Ma JJ. An improved co-evolutionary particle swarm optimization for wireless sensor networks with dynamic deployment, \it{Sensors}\rm{}, 2007;7(3):354-370.

\bibitem{3}
Aziz NAA, Mohemmed AW, Alias MY, Aziz KA, Syahali S. Coverage maximization and energy conservation for mobile wireless sensor networks: A two phase particle swarm optimization algorithm, \it{Proceedings of the International Conference on Bio-Inspired Computing: Theories and Applicayions}\rm{}, pp.64-69, 2011.

\bibitem{4}
Cortes J, Martinez S, Bullo F. Spatially-distributed coverage optimization and control with limited-range interactions. \it{ESAIM: Control, Optimization and Calculus of Variations}\rm{}, 2005;11(4):691-719.

\bibitem{5}
Bullo F, Cortes J, Martinez S. \it{Distributed control of robotic networks}\rm{}. Princeton (NJ): Princeton University Press, 2009.

\bibitem{6}
Leiken F, Leonard NE. Non-uniform coverage and cartograms. \it{SIAM Journal on Control and Optimization}\rm{}, 2009;48(1):351-372.

\bibitem{7}
Deshpande A, Poduri S, Rus D, Sukhatme GS. Distributed coverage control for mobile sensors with location-dependent sensing models, \it{IEEE International Conference on Robotics and Automation (ICRA)}\rm{}, pp.2344-2349, 2009.

\bibitem{8}
Adaldo A, Mansouri SS, Kanellakis C, Dimarogonas DV, Johansson KH and Nikolakopoulos G. Cooperative coverage for surveillance of 3D structures, \it{IEEE International Conference on Intelligent Robots and Systems (IROS)}\rm{}, pp.1838-1845, 2017.

\bibitem{9}
Marden JR. The role of information in multiagent coordination, \it{Proceedings of the 53rd IEEE Conference on Decision and Control}\rm{}, pp.445-450, 2014.

\bibitem{10}
Marden JR, Arslan G, Shamma JS. Cooperative control and potential games, \it{IEEE Transactions on Systems, Man, and Cybernetics, Part B (Cybernetics)}\rm{}, 2009;39(6):1393-1407.

\bibitem{11}
Marden JR and Shamma JS. Revisiting log-linear learning: asynchrony, completeness and payoff-based implementation, \it{Games and Economic Behavior}\rm{}, 2012;75(2):788-808.

\bibitem{12}
Zhu M and Martinez S. Distributed coverage for energy-aware mobile sensor networks. \it{SIAM Journal on Control and Optimization}\rm{}, 2013;51(1):1-27.

\bibitem{13}
Wasa Y, Goto T, Hatanaka T, Fujita M. Seeking optimal equilibria for coverage games — Payoff-based learning approach. Transactions of the Institute of Systems, Control and Information Engineers, 2012;25(9):247-255.  (in Japanese).

\bibitem{14}
Miyano T, Shibata K, Jimbo T. Multi-role coverage control for multi-color mass games: A Voronoi-based cut-in approach, \it{Proceedings of the International Federation of Automatic Control (IFAC)}\rm{}, pp.2670-2675, 2017.

\bibitem{15}
Bullo F, Frazzoli E, Pavone M, Savla K, Smith SL. Dynamic vehicle routing for robotic systems, \it{Proceedings of the IEEE}\rm{}, 99(9):1482-1504.

\bibitem{16}
Sugimoto K, Hatanaka T, Fujita M, Huebel N. Experimental study on persistent coverage control with information decay, \it{54th Annual Conference of the Society of Instrument and Control Engineers of Japan (SICE)}\rm{}, pp.164-169, 2015.

\bibitem{17}
Zhou D, Schwager M. Assistive collision avoidance for quadrotor swarm teleoperation, \it{IEEE International Conference on Robotics and Automation (ICRA)}\rm{}, pp.1249-1254, 2016.

\bibitem{18}
Jager M , Nebel B. Decentralized collision avoidance, deadlock detection, and deadlock resolution for multiple mobile robots, \it{IEEE International Conference on Intelligent Robots and Systems (IROS)}\rm{}, pp.1213-1219, 2001.

\bibitem{19}
GitHub - AutonomyLab/bebop$\_$autonomy [Internet]. Available from: https://github.com/AutonomyLab/bebop$\_$autonomy.

\bibitem{20}
GitHub - ros-drivers/vrpn$\_$client$\_$ros [Internet]. Available from: https://github.com/ros-drivers/vrpn$\_$client$\_$ros.

\bibitem{21}
Shibata K, Miyano T, Jimbo T. Development and experimentation of high response coverage control for blind spot monitoring system. Transactions of the Society of Instrument and Control Engineers of Japan (SICE), 2018;54(2):201-208. (in Japanese).

\bibitem{22}
Palacios FM, Quesada ESE, Sanahuja G, Salazar S, Salazar OG, Carrillo LRG. Test bed for applications of heterogeneous unmanned vehicles, \it{International Journal of Advanced Robotic Systems}\rm{}, 2017;14(1):1729881416687111.

\end{thebibliography}
\end{document}